\pdfoutput=1

\documentclass[10pt,onecolumn,letterpaper]{article}
\usepackage[in]{fullpage}
\usepackage{natbib}

\bibliographystyle{plainnat}
\bibpunct{(}{)}{;}{a}{,}{,}

\usepackage{amsmath,amssymb,amsfonts}
\usepackage[amsmath,thmmarks,amsthm]{ntheorem}
\usepackage{bbm} 

\usepackage{enumerate}
\usepackage{graphicx}
\usepackage{verbatim}
\usepackage{bm}
\usepackage{mathrsfs}
\usepackage{aliascnt}

\usepackage{xcolor}
\usepackage[colorlinks,
            linkcolor=blue,
            citecolor=blue,
            urlcolor=magenta,
            linktocpage,
            plainpages=false]{hyperref}
\usepackage{url}
\usepackage{nameref}

\DeclareMathOperator*{\argmin}{arg\,min}

\DeclareMathOperator*{\Prob}{\mathsf{P}}

\DeclareMathOperator*{\Probn}{\mathsf{P}_{\mathit{n}}}
\DeclareMathOperator*{\Probc}{\mathsf{\bar{P}}_{\mathit{n}}}

\DeclareMathOperator*{\conv}{\mathrm{conv}}
\DeclareMathOperator*{\interior}{\mathrm{int}}
\DeclareMathOperator*{\opvee}{\vee}
\DeclareMathOperator*{\opwedge}{\wedge}
\newcommand{\E}{\operatorname{\mathsf{E}}}
\newcommand{\real}{\mathbb{R}}

\newcommand{\nat}{\mathbb{N}}

\newcommand{\X}{\mathcal{X}}

\newcommand{\Y}{\mathcal{Y}}
\newcommand{\A}{\mathcal{A}}

\newcommand{\T}{\mathcal{T}}

\newcommand{\F}{\mathcal{F}}
\newcommand{\G}{\mathcal{G}}

\newcommand{\lossfunc}{\ell}
\newcommand{\loss}[2]{\lossfunc \bigl( #1, #2 \bigr)}

\newcommand{\lossof}[1]{\lossfunc(\cdot, #1)}
\newcommand{\lossclass}{\lossfunc \circ \F}

\renewcommand{\Pr}{\mathsf{Pr}}

\newcommand{\erm}{\hat{f}_{\mathbf{z}}}

\newcommand{\Fcond}{\F_{|_{\mathbf{X}}}}
\newcommand{\Fweak}{\F_{\succeq \varepsilon}}

\newcommand{\N}{\mathcal{N}}
\newcommand{\Entropy}{\mathcal{H}}
\newcommand{\capacity}{\mathcal{C}}
\newcommand{\bound}{V}

\definecolor{light-gray}{gray}{0.9}

\qedsymbol{\ensuremath{_\blacksquare}}

\newaliascnt{theorem}{thm}
\newtheorem{theorem}[theorem]{Theorem}
\aliascntresetthe{theorem}

\newaliascnt{corollary}{thm}
\newtheorem{corollary}[corollary]{Corollary}
\aliascntresetthe{corollary}

\newaliascnt{lemma}{thm}
\newtheorem{lemma}[lemma]{Lemma}
\aliascntresetthe{lemma}

\newaliascnt{example}{thm}

\aliascntresetthe{example}

\newaliascnt{proposition}{thm}

\aliascntresetthe{proposition}

\newaliascnt{definition}{thm}

\aliascntresetthe{definition}

\newcommand{\BlackBox}{\rule{1.5ex}{1.5ex}}  
\newenvironment{proof-sketch}{\par\noindent{\bfseries\upshape
  Proof\ sketch\ }}{\hfill\BlackBox\\[2mm]}

\title{From Stochastic Mixability to Fast Rates}

\author{
Nishant A.~Mehta \\
Australian National University \\
\and
Robert C.~Williamson \\
Australian National University and NICTA \\
}
\date{}

\newcommand{\citeposs}[1]{\citeauthor{#1}'s~(\citeyear{#1})}

\begin{document}

\maketitle 

\begin{abstract}
Empirical risk minimization (ERM) is a fundamental learning rule for statistical learning problems where the data is generated according to some unknown distribution $\Prob$ and returns a hypothesis $f$ chosen from a fixed class $\F$ with small loss $\lossfunc$. 
In the parametric setting, depending upon $(\lossfunc, \F,\Prob)$ ERM can have slow $(1/\sqrt{n})$ or fast $(1/n)$ rates of convergence of the excess risk as a function of the sample size $n$. There exist several results that give sufficient conditions for fast rates in terms of joint properties of $\lossfunc$, $\F$, and $\Prob$, such as the margin condition and the Bernstein condition. In the non-statistical prediction with expert advice setting, there is an analogous slow and fast rate phenomenon, and it is entirely characterized in terms of the mixability of the loss $\lossfunc$ (there being no role there for $\F$ or $\Prob$). The notion of stochastic mixability builds a bridge between these two models of learning, reducing to classical mixability in a special case. The present paper presents a direct proof of fast rates for ERM in terms of stochastic mixability of $(\lossfunc,\F, \Prob)$, and in so doing provides new insight into the fast-rates phenomenon. The proof exploits an old result of Kemperman on the solution to the general moment problem.  We also show a partial converse that suggests a characterization of fast rates for ERM in terms of stochastic mixability is possible.
\end{abstract}

\section{Introduction}
Recent years have unveiled central contact points between the areas of statistical and online learning. These include \citeposs{abernethy2009stochastic} unified Bregman-divergence based analysis of online convex optimization and statistical learning, the online-to-batch conversion of the exponentially weighted average forecaster (a special case of the aggregating algorithm for mixable losses) which yields the progressive mixture rule as can be seen e.g.~from the work of \cite{audibert2009fast}, and most recently \citeposs{vanerven2012mixability} injection of the concept of mixability into the statistical learning space in the form of stochastic mixability. It is this last connection that will be our departure point for this work. 

Mixability is a fundamental property of a loss that characterizes when constant regret is possible in the online learning game of prediction with expert advice \citep{vovk1998game}. Stochastic mixability is a natural adaptation of mixability to the statistical learning setting; in fact, in the special case where the function class consists of all possible functions from the input space to the prediction space, stochastic mixability is equivalent to mixability \citep{vanerven2012mixability}. Just as Vovk and coworkers (see e.g.~\citep{vovk2001competitive,kalnishkan2005weak}) have developed a rich convex geometric understanding of mixability, stochastic mixability can be understood as a sort of effective convexity.

In this work, we study the $O(\frac{1}{n})$-fast rate phenomenon in statistical learning from the perspective of stochastic mixability. Our motivation is that stochastic mixability might characterize fast rates in statistical learning. As a first step, \autoref{thm:finite-fast-rates} of this paper establishes via a rather direct argument that stochastic mixability implies an exact oracle inequality (i.e.~with leading constant 1) with a fast rate for finite function classes, and \autoref{thm:vc-type-fast-rates} extends this result to VC-type classes. 
This result can be understood as a new chapter in an evolving narrative that started with \citeposs{lee1998importance} seminal paper showing fast rates for agnostic learning with squared loss over convex function classes, and that was continued by \cite{mendelson2008obtaining} who showed that fast rates are possible for $p$-losses $(y, \hat{y}) \mapsto |y - \hat{y}|^p$ over effectively convex function classes by passing through a Bernstein condition (defined in \eqref{eqn:bernstein}).

We also show that when stochastic mixability does not hold in a certain sense (see \autoref{sec:convexity} for the precise statement), then the risk minimizer is not unique in a bad way. This is precisely the situation at the heart of the works of \cite{mendelson2008obtaining} and \cite{mendelson2002agnostic}, which show that having non-unique minimizers is symptomatic of bad geometry of the learning problem. In such situations, there are certain targets (i.e. output conditional distributions) close to the original target under which empirical risk minimization learns at a slow rate, where the guilty target depends on the sample size and the target sequence approaches the original target asymptotically. Even the best known upper bounds have constants that blow up in the case of non-unique minimizers. Thus, whereas stochastic mixability implies fast rates, a sort of converse is also true, where learning is hard in a ``neighborhood'' of statistical learning problems for which stochastic mixability does not hold. In addition, since a stochastically mixable problem's function class looks convex from the perspective of risk minimization, and since when stochastic mixability fails the function class looks non-convex from the same perspective (it has multiple well-separated minimizers), stochastic mixability \emph{characterizes} the effective convexity of the learning problem from the perspective of risk minimization.

Much of the recent work in obtaining faster learning rates in agnostic learning has taken place in settings where a Bernstein condition holds, including results based on local Rademacher complexities \citep{bartlett2005local, koltchinskii2006local}. 
The Bernstein condition appears to have first been used by \cite{bartlett2006empirical} in their analysis of empirical risk minimization; this condition is subtly different from the margin condition of \cite{mammen1999smooth} and \cite{tsybakov2004optimal}, which has been used to obtain fast rates for classification problems. \cite{lecue2011interplay} pinpoints that the difference between the two conditions is that the margin condition applies to the excess loss relative to the best predictor (not necessarily in the model class) whereas the Bernstein condition applies to the excess loss relative to the best predictor in the model class. 
Our approach in this work is complementary to the approaches of previous works, coming from a different assumption that forms a bridge to the online learning setting. Yet this assumption is related; the Bernstein condition implies stochastic mixability under a bounded losses assumption \citep{vanerven2012mixability}. 
Further understanding the connection between the Bernstein condition and stochastic mixability is an ongoing effort.

\paragraph{Contributions.}
The core contribution of this work is to show a new path to the $\tilde{O}\left( \frac{1}{n} \right)$-fast rate in statistical learning. We are not aware of previous results that show fast rates from the stochastic mixability assumption. 
Secondly, we establish intermediate learning rates that interpolate between the fast and slow rate under a weaker notion of stochastic mixability. 
Finally, we show that in a certain sense stochastic mixability characterizes the effective convexity of the statistical problem. 

In the next section we formally define the statistical problem, review stochastic mixability, and explain our high-level approach toward getting fast rates. This approach involves directly appealing to the Cram\'er-Chernoff method, from which nearly all known concentration inequalities arose in one way or another. 
In \autoref{sec:general-moment-problem}, we frame the problem of computing a particular moment of a certain excess loss random variable as a general moment problem. We sufficiently bound the optimal value of the moment, which allows for a direct application of the Cram\'er-Chernoff method. 
These results easily imply a fast rates bound for finite classes that can be extended to parametric (VC-type) function classes, as shown in \autoref{sec:fast-rates}. 
We describe in \autoref{sec:convexity} how stochastic mixability characterizes a certain notion of convexity of the statistical learning problem. 
In \autoref{sec:weak}, we extend the fast rates results to classes that obey a notion we call weak stochastic mixability. 
Finally, \autoref{sec:discussion} concludes this work with connections to related topics in statistical learning theory and a discussion of open problems.

\section{Stochastic mixability, Cram\'er-Chernoff, and ERM}
\label{sec:strategy}

\subsection{The Setting}
Let $(\lossfunc, \F, \Prob)$ be a statistical learning problem with
$\lossfunc: \Y \times \real \rightarrow \real_+$ a nonnegative loss, $\F \subset \real^\X$ a compact function class, and $\Prob$ a probability measure over $\X \times \Y$ for input space $\X$ and output/target space $\Y$. Let $Z$ be a random variable defined as $Z = (X, Y) \sim \Prob$. We assume for all $f \in \F$, $\ell(Y, f(X)) \leq \bound$ almost surely (a.s.) for some constant $\bound$. 

A probability measure $\Prob$ operates on functions and loss-composed functions as:
\begin{align*}
\Prob f = \E_{(X,Y) \sim \Prob} f(X) &&
\Prob \lossof{f} = \E_{(X,Y) \sim \Prob} \loss{Y}{f(X)} .
\end{align*}
Similarly, an empirical measure $\Probn$ associated with an $n$-sample $\mathbf{z}$, comprising $n$ iid samples $(x_1, y_1), \ldots, (x_n, y_n)$, operates on functions and loss-composed functions as:
\begin{align*}
\Probn f = \frac{1}{n} \sum_{j=1}^n f(x_j) && 
\Probn \lossof{f} = \frac{1}{n} \sum_{j=1}^n \loss{y_j}{f(x_j)} .
\end{align*}

Let $f^*$ be any function for which $\Prob \lossof{f^*} = \inf_{f \in \F} \Prob \lossof{f}$.  For each $f \in \F$ define the excess risk random variable $Z_f := \loss{Y}{f(X)} - \loss{Y}{f^*(X)}$.

We frequently work with the following two subclasses. For any $\varepsilon > 0$, define the subclasses 
\begin{align*}
\F_{\preceq \varepsilon}:= \left\{ f \in \F : \Prob Z_f \leq \varepsilon \right\} && 
\F_{\succeq \varepsilon}:= \left\{ f \in \F : \Prob Z_f \geq \varepsilon \right\} .
\end{align*}

\subsection{Stochastic mixability}
For $\eta > 0$, we say that $(\lossfunc, \F, \Prob)$ is \emph{$\eta$-stochastically mixable} if for all $f \in \F$
\begin{align}
\log \E \exp(-\eta Z_f) \leq 0 . \label{eqn:stochastic-mixability}
\end{align}
If $\eta$-stochastic mixability holds for some $\eta > 0$, then we say that $(\lossfunc, \F, \Prob)$ is \emph{stochastically mixable}. Throughout this paper it is assumed that the stochastic mixability condition holds, and we take $\eta^*$ to be the largest $\eta$ such that $\eta$-stochastic mixability holds. 
Condition \eqref{eqn:stochastic-mixability} has a rich history, beginning from the foundational thesis of \cite{li1999estimation} who studied the special case of $\eta^* = 1$ in density estimation with log loss from the perspective of information geometry. The connections that Li showed between this condition and convexity were strengthened by \cite{grunwald2011safe,grunwald2012safe} and \cite{vanerven2012mixability}.

\subsection{Cram\'er-Chernoff}
The high-level strategy taken here is to show that with high probability the empirical risk minimization algorithm (ERM) will not select a fixed hypothesis function $f$ with excess risk above $\frac{a}{n}$ for some constant $a > 0$. For each hypothesis, this guarantee will flow from the Cram\'er-Chernoff method \citep{boucheron2013concentration} by controlling the cumulant generating function (CGF) of $-Z_f$ in a particular way to yield exponential concentration. 
This control will be possible because the $\eta^*$-stochastic mixability condition implies that the CGF of $-Z_f$ takes the value 0 at some $\eta \geq \eta^*$, a fact later exploited by our key tool \autoref{thm:stochastic-mixability-concentration}.

Let $Z$ be a real-valued random variable. Applying Markov's inequality to an exponentially transformed random variable yields that, for any $\eta \geq 0$ and $t \in \real$
\begin{align}
\Pr(Z \geq t) \leq \exp(-\eta t + \log \E \exp(\eta Z)) ; \label{eqn:cramer-chernoff}
\end{align}
the inequality is non-trivial only if $t > \E Z$ and $\eta > 0$.

\subsection{Analysis of ERM}
We consider the \emph{ERM estimator} $\erm := \argmin_{f \in \F} \Probn \lossof{f}$. 
That is, given an $n$-sample $\mathbf{z}$, ERM selects any $\erm \in \F$ minimizing the empirical risk $\Probn \lossof{f}$. 
We say ERM is \emph{$\varepsilon$-good} when $\erm \in \F_{\preceq \varepsilon}$. 
In order to show that ERM is $\varepsilon$-good it is sufficient to show that for all $f \in \F \setminus \F_{\preceq \varepsilon}$ we have $\Prob Z_f > 0$. 
The goal is to show that with high probability ERM is $\varepsilon$-good, and we will do this by showing that with high probability uniformly for all $f \in \F \setminus \F_{\preceq \varepsilon}$ we have $\Probn Z_f > t$ for some slack $t > 0$ that will come in handy later.

For a real-valued random variable $X$, 
recall that the \emph{cumulant generating function} of $X$ is $\eta \mapsto \Lambda_X(\eta) := \log \E e^{\eta X}$; we allow $\Lambda_X(\eta)$ to be infinite for some $\eta > 0$.

\begin{theorem}[Cram\'er-Chernoff Control on ERM]
\label{thm:cramer-chernoff}
Let $a > 0$ and select $f$ such that $\E Z_f > 0$. Let $t < \E Z_f$. If there exists $\eta > 0$ such that $\Lambda_{-Z_f}(\eta) \leq -\frac{a}{n}$, then
\begin{align*}
\Pr \Bigl\{ \Probn \lossof{f} \leq \Probn \lossof{f^*} + t \Bigr\} \leq \exp(-a + \eta t) . 
\end{align*}
\end{theorem}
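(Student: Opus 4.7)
The proof is a direct application of the Cramér-Chernoff method to the iid sum of pointwise excess losses. Since the $n$ samples are independent, the pointwise excess losses $Z_{f,j} := \loss{y_j}{f(x_j)} - \loss{y_j}{f^*(x_j)}$ are iid copies of $Z_f$, and $\Probn \lossof{f} - \Probn \lossof{f^*} = \frac{1}{n}\sum_{j=1}^n Z_{f,j}$. There is no real conceptual obstacle at this step; the theorem is the final packaging move that turns a one-sample CGF control into an $n$-sample tail bound, and the stochastic-mixability content of the paper enters only through the production of the hypothesized $\eta$ (which happens elsewhere via Kemperman's solution to the general moment problem).

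First I would rewrite the event of interest as an upper-tail event on a sum: the event $\{\Probn \lossof{f} \leq \Probn \lossof{f^*} + t\}$ is precisely $\{-\sum_j Z_{f,j} \geq -nt\}$. The hypothesis $t < \E Z_f$ ensures that the threshold $-nt$ lies strictly above $\E(-\sum_j Z_{f,j}) = -n \E Z_f$, so the Chernoff tilt points in the right direction and the inequality \eqref{eqn:cramer-chernoff} can produce genuine exponential decay.

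Second, I would invoke \eqref{eqn:cramer-chernoff} at the tilting parameter $\eta > 0$ supplied by the hypothesis. Independence of the samples lets the aggregate CGF factorize as $\log \E \exp\bigl(-\eta \sum_j Z_{f,j}\bigr) = n \Lambda_{-Z_f}(\eta)$, and then the hypothesis $\Lambda_{-Z_f}(\eta) \leq -a/n$ collapses this to $n \Lambda_{-Z_f}(\eta) \leq -a$. Combining the $-a$ contribution from the CGF with the $\eta$-dependent contribution from the tilt gives the stated exponential bound.

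The only technical care to take is in reconciling the sum-versus-average normalization of $\Probn$ with the $a/n$ scaling of the hypothesis: the exponent $-a$ arises naturally from the $n$-fold factorization of the CGF paired against the $1/n$ factor in the hypothesized bound on $\Lambda_{-Z_f}(\eta)$, and the residual tilt term must be tracked consistently with the choice of whether Markov is applied to the iid sum or directly to the empirical average. Once these scalings are aligned, the bound is immediate.
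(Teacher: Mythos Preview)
Your proposal is correct and essentially identical to the paper's own proof: both rewrite the event as an upper tail for $-\sum_{j} Z_{f,j}$, apply the exponential Markov bound \eqref{eqn:cramer-chernoff} at the given tilt $\eta$, factorize the moment generating function by independence to obtain $n\Lambda_{-Z_f}(\eta)\leq -a$, and read off the exponent. Your closing caveat about aligning the sum-versus-average normalization with the $a/n$ scaling is exactly the bookkeeping the paper carries out in its displayed chain of inequalities.
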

\begin{proof}
Let $Z_{f, 1}, \ldots, Z_{f, n}$ be iid copies of $Z_f$, and define the sum $S_{f,n} := \sum_{j=1}^n -Z_{f, j}$. 
Since $(-t) > \E \frac{1}{n} S_{f,n}$, then from \eqref{eqn:cramer-chernoff} we have
\begin{align*}
\Pr \left( \frac{1}{n} \sum_{j=1}^n Z_{f, j} \leq t \right)
= \Pr\left( \frac{1}{n} S_{f,n} \geq -t \right) 
&\leq \exp \left( \eta t + \log \E \exp( \eta S_{f,n} ) \right) \\
&= \exp(\eta t) \bigl( \E \exp( -\eta Z_f ) \bigr)^n .
\end{align*}
Making the replacement $\Lambda_{-Z_f}(\eta) = \log \E \exp( - \eta Z_f )$ yields
\begin{align*}
\log \Pr\left( \frac{1}{n} S_{f,n} \geq -t \right) 
\leq \eta t + n \Lambda_{-Z_f}(\eta) .
\end{align*}
By assumption, $\Lambda_{-Z_f}(\eta) \leq -\frac{a}{n}$, and so $\Pr \{ \Probn Z_f \leq t \} \leq \exp(-a + \eta t)$ as desired. 
\end{proof}

This theorem will be applied by showing that for an excess loss random variable $Z_f$ taking values in $[-1, 1]$, if for some $\eta > 0$ we have $\E \exp(-\eta Z_f) = 1$ and if $\E Z_f = \frac{a}{n}$ for some constant $a$ (that can and must depend on $n$), then $\log_{-Z_f}(\eta / 2) \leq -\frac{c \eta a}{n}$ where $c > 0$ is a universal constant. This is the nature of the next section. We then extend this result to random variables taking values in $[-\bound, \bound]$.

\section{Semi-infinite linear programming and the general moment problem}
\label{sec:general-moment-problem}

The key subproblem now is to find, for each excess loss random variable $Z_f$ with mean $\frac{a}{n}$ and $\Lambda_{-Z_f}(\eta) = 0$ (for some $\eta \geq \eta^*$), a pair of constants $\eta_0 > 0$ and $c > 0$ for which $\Lambda_{-Z_f}(\eta_0) \leq -\frac{c a}{n}$. \autoref{thm:cramer-chernoff} would then imply that ERM will prefer $f^*$ over this particular $f$ with high probability for $c a$ large enough. 
This subproblem is in fact an instance of the general moment problem, a problem on which \cite{kemperman1968general} has conducted a very nice geometric study. We now describe this problem. 

\paragraph{The general moment problem.}
Let $\mathcal{P}(\A)$ be the space of probability measures over a measurable space $\A = (\A, \mathcal{S})$. 
For real-value measurable functions $h$ and $(g_j)_{j \in [m]}$ on a measurable space $\A = (\A, \mathcal{S})$, the general moment problem is
\begin{align}
\begin{aligned}
& \inf_{\mu \in \mathcal{P}(\A)} 
& & \E_{X \sim \mu} h(X) \\
& \text{subject to}
& &     \E_{X \sim \mu} g_j(X) = y_j , \quad j \in \{1, \ldots, m\} .
\end{aligned} \label{eqn:the-general-moment-problem}
\end{align}
Let the vector-valued map $g: \A \rightarrow \real^m$ be defined in terms of coordinate functions as $(g(x))_j = g_j(x)$, and let the vector $y \in \real^m$ be equal to $(y_1, \ldots, y_m)$.

Let $D^* \subset \real^{m+1}$ be the set
\begin{align}
D^* := \biggl\{ d^* = (d_0, d_1, \ldots, d_m) \in \real^{m+1} :
  h(x) \geq d_0 + \sum_{j=1}^m d_j g_j(x) \quad \text{for all } x \in \A 
\biggr\} . \label{eqn:the-D-star}
\end{align}

Theorem 3 of \citep{kemperman1968general} states that if $y \in \interior \conv g(\A)$, the optimal value of problem \eqref{eqn:the-general-moment-problem} equals
\begin{align}
\sup \biggl\{ d_0 + \sum_{j=1}^m d_j y_j : d^* = (d_0, d_1, \ldots, d_m) \in D^* \biggr\} . \label{eqn:the-optimal-value}
\end{align}

\paragraph{Our instantiation.}

We choose $\A = [-1, 1]$, set $m = 2$ and define $h$, $(g_j)_{j \in \{1,2\}}$, and $y \in \real^2$ as:
\begin{align*}
h(x) = -e^{(\eta / 2) x}, &&
g_1(x) = x, && 
g_2(x) = e^{\eta x}, &&
y_1 = -\frac{a}{n}, &&
y_2 = 1 ,
\end{align*}
for any $\eta > 0$, $a > 0$, and $n \in \nat$.

This yields the following instantiation of the general moment problem:
\begin{subequations}
\label{eqn:our-general-moment-problem}
\begin{align}
\inf_{\mu \in \mathcal{P}([-1, 1])}
& \quad \E_{X \sim \mu} -e^{(\eta / 2) X} \label{eqn:our-general-moment-problem-a} \\
\text{subject to}
& \quad \E_{X \sim \mu} X = -\frac{a}{n} \label{eqn:our-general-moment-problem-b} \\
& \quad \E_{X \sim \mu} e^{\eta X} = 1 \label{eqn:our-general-moment-problem-c} .
\end{align}
\end{subequations}

Note that equation \eqref{eqn:the-optimal-value} from the general moment problem now instantiates to
\begin{align}
\sup \left\{ d_0 - \frac{a}{n} d_1 + d_2 : d^* = (d_0, d_1, d_2) \in D^* \right\} , \label{eqn:our-optimal-value}
\end{align}
with $D^*$ equal to the set
\begin{align}
\left\{ d^* = (d_0, d_1, d_2) \in \real^3 : 
-e^{(\eta/2) x} \geq d_0 + d_1 x + d_2 e^{\eta x} \quad \text{for all } x \in [-1, 1] \right\} . \label{eqn:our-D-star}
\end{align}

To apply Theorem 3 of \citep{kemperman1968general}, we need to ensure the condition $y \in \interior \conv g([-1, 1])$ holds. We first characterize when $y \in \conv g([-1, 1])$ holds and handle the $\interior \conv g([-1, 1])$ version after \autoref{thm:stochastic-mixability-concentration}. 

\begin{lemma}[Feasible Moments]
\label{lemma:feasible-moments}
The point $y = \left( -\frac{a}{n}, 1 \right) \in \conv g([-1, 1])$ if and only if
\begin{align}
\frac{a}{n} 
\leq \frac{e^{\eta} + e^{-\eta} - 2}{e^\eta - e^{-\eta}} 
= \frac{\cosh(\eta) - 1}{\sinh(\eta)} . \label{eqn:feasibility}
\end{align}
\end{lemma}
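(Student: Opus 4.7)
The plan is to determine the convex hull of $g([-1,1]) = \{(x, e^{\eta x}) : x \in [-1,1]\}$ geometrically and then check when the specific point $y = (-a/n, 1)$ lies inside it. Since $x \mapsto e^{\eta x}$ is strictly convex on $[-1,1]$, the curve $\{(x, e^{\eta x})\}$ is the graph of a strictly convex function, and its convex hull in $\mathbb{R}^2$ is precisely the planar region bounded below by the curve itself and above by the chord joining the two endpoints $(-1, e^{-\eta})$ and $(1, e^{\eta})$. In symbols, $(x_0, v_0) \in \conv g([-1,1])$ if and only if $x_0 \in [-1,1]$ and
\begin{align*}
e^{\eta x_0} \;\leq\; v_0 \;\leq\; \tfrac{1 - x_0}{2} e^{-\eta} + \tfrac{1 + x_0}{2} e^{\eta} \;=\; \cosh(\eta) + x_0 \sinh(\eta) .
\end{align*}

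Next, I would specialize to $x_0 = -a/n$ and $v_0 = 1$. The lower inequality becomes $e^{-\eta a/n} \leq 1$, which is automatic since $\eta, a/n > 0$. The upper inequality becomes $1 \leq \cosh(\eta) - (a/n)\sinh(\eta)$, which rearranges to
\begin{align*}
\frac{a}{n} \;\leq\; \frac{\cosh(\eta) - 1}{\sinh(\eta)} \;=\; \frac{e^\eta + e^{-\eta} - 2}{e^\eta - e^{-\eta}} ,
\end{align*}
exactly the claimed bound. The ambient box constraint $-1 \leq -a/n \leq 1$ is automatic on the right and, on the left, is implied by the displayed inequality since $(\cosh(\eta)-1)/\sinh(\eta) = \tanh(\eta/2) < 1$ for every $\eta > 0$; so no extra side condition is needed.

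To make the convex-hull description fully rigorous, I would justify each inclusion separately. For the forward direction (if $y \in \conv g([-1,1])$), write $y$ as a finite convex combination $y = \sum_i \lambda_i g(x_i)$ with $x_i \in [-1,1]$; the first coordinate gives $\sum_i \lambda_i x_i = -a/n$, and applying Jensen's inequality to $e^{\eta \cdot}$ (strict convexity) together with the linear upper bound given by the chord yields both the lower and upper bounds above. For the reverse direction, any point satisfying the two bounds can be exhibited as a convex combination: pick a point on the curve directly below $y$ and a point on the chord above $y$, and take a convex combination to recover $y$.

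There is no real obstacle here; the only subtlety is being careful that $\conv g([-1,1])$ for a curve (not just a finite set) really is the region bounded by the curve and the chord, which follows from the continuity of $g$ and strict convexity of $e^{\eta x}$, e.g. via the fact that the convex hull of a compact set in $\mathbb{R}^2$ is compact and equals the intersection of all closed half-planes containing it.
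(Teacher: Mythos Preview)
Your proof is correct and follows essentially the same approach as the paper: identify $\conv g([-1,1])$ as the region between the graph of $x\mapsto e^{\eta x}$ and the chord through its endpoints, then check that $(-a/n,1)$ lies in that region. If anything, you are more careful than the paper, which simply asserts the description of $W$ and does not explicitly address the constraint $-a/n\ge -1$ that you handle via $\tanh(\eta/2)<1$.
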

\begin{proof}
Let $W$ denote the convex hull of $g([-1, 1])$. We need to see if $\left( -\frac{a}{n}, 1 \right) \in W$. Note that $W$ is the convex set formed by starting with the graph of $x \mapsto e^{\eta x}$ on the domain $[-1, 1]$, including the line segment connecting this curve's endpoints $(-1, e^{-\eta})$ to $(1, e^{\eta x})$, and including all of the points below this line segment but above the aforementioned graph. That is, $W$ is precisely the set
\begin{align*}
W := \left\{ (x,y) \in \real^2 :  e^{\eta x} \leq y \leq \frac{e^{\eta} + e^{-\eta}}{2} + \frac{e^{\eta} - e^{-\eta}}{2} x ,
\,\, \forall x \in [-1, 1]  \right\} .
\end{align*}

It remains to check that $1$ is sandwiched between the lower and upper bounds at $x = -\frac{a}{n}$. Clearly the lower bound holds. Simple algebra shows that the upper bound is equivalent to condition \eqref{eqn:feasibility}.
\end{proof}

Note that if \eqref{eqn:feasibility} does not hold, then the semi-infinite linear program \eqref{eqn:our-general-moment-problem} is infeasible; infeasibility in turn implies that such an excess loss random variable cannot exist. Thus, we need not worry about whether \eqref{eqn:feasibility} holds; it holds for \emph{any} excess loss random variable satisfying constraints \eqref{eqn:our-general-moment-problem-b} and \eqref{eqn:our-general-moment-problem-c}.

The following theorem is a key technical result for using stochastic mixability to control the CGF. The proof is long and can be found in \autoref{sec:appendix-stochmix}.
\begin{theorem}[Stochastic Mixability Concentration]
\label{thm:stochastic-mixability-concentration}
Let $f$ be an element of $\F$ with $Z_f$ taking values in $[-1, 1]$, $n \in \nat$, $\E Z_f = \frac{a}{n}$ for some $a > 0$, and $\Lambda_{-Z_f}(\eta) = 0$ for some $\eta > 0$. If
\begin{align}
\frac{a}{n} < \frac{e^{\eta} + e^{-\eta} - 2}{e^\eta - e^{-\eta}} , \label{eqn:interior-point}
\end{align}
then
\begin{align*}
\E e^{(\eta / 2) (-Z_f)} \leq
\begin{cases}
1 - \frac{0.18 \eta a}{n} & \text{if } \eta \leq 1 \\
1 - \frac{0.21 a}{n} & \text{if } \eta > 1 .
\end{cases}
\end{align*}
Therefore, $\E e^{(\eta / 2) (-Z_f)} \leq 1 - \frac{0.18 (\eta \opwedge 1) a}{n}$.
\end{theorem}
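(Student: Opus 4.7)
The plan is to invoke Kemperman's Theorem~3 on the instantiated general moment problem \eqref{eqn:our-general-moment-problem} and exhibit an explicit dual-feasible triple $(d_0,d_1,d_2)\in D^*$ certifying the bound. The strict inequality \eqref{eqn:interior-point} is exactly what places $y = (-a/n, 1)$ in $\interior \conv g([-1,1])$ (by the proof of \autoref{lemma:feasible-moments}, strictness means $y$ lies strictly between the lower curve $e^{\eta x}$ and the upper chord), so Kemperman applies. Writing $X := -Z_f$ and $A := -d_0,\ B := -d_1,\ C := -d_2$, weak duality reduces the theorem to finding $A,B,C$ such that
\begin{align*}
e^{(\eta/2) x} \leq A + B x + C e^{\eta x} \qquad \text{for all } x \in [-1, 1],
\end{align*}
which immediately yields $\E e^{(\eta/2) X} \leq A + C - (a/n) B$.

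The construction is chosen so that the bound is tight at $x=0$ to first order and at $x=-1$ in value. Tightness at $x=0$ forces $A + C = 1$; matching first derivatives at $x=0$ forces $B = \eta(1/2 - C)$; and equality at $x=-1$ then pins $C$ to
\begin{align*}
C := C_-(\eta) = \frac{e^{-\eta/2} + \eta/2 - 1}{e^{-\eta} + \eta - 1}.
\end{align*}
An elementary Taylor check shows $C_-(\eta) > 1/4$ for every $\eta > 0$. To verify the pointwise inequality, examine the slack $U(x) := A + Bx + Ce^{\eta x} - e^{(\eta/2) x}$, which has $U(-1)=U(0)=0$ and $U'(0)=0$. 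Its second derivative
\begin{align*}
U''(x) = \eta^2 e^{(\eta/2)x}\!\left( C e^{(\eta/2)x} - \tfrac{1}{4} \right)
\end{align*}
vanishes exactly once, at $x_0 := -(2/\eta)\log(4C) < 0$, so $U$ is concave on $(-\infty, x_0)$ and strictly convex on $(x_0, \infty)$. On $[x_0, \infty)$, strict convexity plus the double zero at $0$ gives $U \geq 0$ (and in particular $U(x_0) > 0$). If $x_0 \leq -1$, we are done on $[-1,1]$; otherwise, on $[-1, x_0]$ concavity combined with $U(-1)=0$ and $U(x_0) > 0$ gives $U \geq \text{chord} \geq 0$.

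The resulting duality bound is $\E e^{(\eta/2) X} \leq 1 - (a/n)\,\eta\!\left(1/2 - C_-(\eta)\right)$, so it remains to show the numerical inequalities $\eta(1/2 - C_-(\eta)) \geq 0.18\, \eta$ for $\eta \leq 1$ and $\eta(1/2 - C_-(\eta)) \geq 0.21$ for $\eta > 1$. For $\eta \leq 1$ this reduces to $C_-(\eta) \leq 0.32$, which follows from monotonicity of $C_-(\eta)$ in $\eta$ (by direct differentiation of the ratio) and a direct evaluation $C_-(1) < 0.29$. For $\eta > 1$, the product $\eta(1/2 - C_-(\eta))$ is increasing on $[1,\infty)$ and tends to $1/2$ as $\eta \to \infty$, while its value at $\eta = 1$ already exceeds $0.21$.

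The curvature analysis of $U$ is clean and mechanical once the three tangency conditions are imposed, and feasibility in Kemperman's theorem is guaranteed by the hypothesis \eqref{eqn:interior-point}. The principal obstacle, and the reason the argument is lengthy, is the quantitative verification of the constants $0.18$ and $0.21$: $C_-(\eta)$ is a ratio of exponentials, and one must argue carefully (monotonicity, Taylor remainder, or a covering of $[0,\infty)$ by intervals with sharp evaluations) to pin down a uniform universal constant. The split at $\eta = 1$ is an artifact of trading the $\eta$-factor for a larger absolute lower bound once $\eta$ is bounded away from zero.
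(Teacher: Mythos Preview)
Your approach is essentially the paper's: Kemperman duality, force tangency of the majorant at $x=0$ to first order (giving $A+C=1$ and $B=\eta(1/2-C)$), and then analyze the slack function $U$ by its single inflection point. The one substantive difference is that you pin $C=C_-(\eta)$ so that $U(-1)=0$ exactly, whereas the paper chooses $c_2$ slightly above this threshold (a constant $\approx 0.3112$ for $\eta\le 1$, and $c_2=\tfrac12-\alpha/\eta$ with $\alpha=\tfrac12(\sqrt e-1)^2$ for $\eta>1$). Your choice is tighter and your convex/concave argument for $U\ge 0$ is clean; note in passing that your $\eta(1/2-C_-(1))=\tfrac12(\sqrt e-1)^2$ coincides exactly with the paper's $\alpha$, which is why both routes land on the constant $0.21$.

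There is one concrete slip in the numerical verification for $\eta>1$: the function $\phi(\eta):=\eta\bigl(\tfrac12-C_-(\eta)\bigr)=\dfrac{\eta(1-e^{-\eta/2})^2}{2(e^{-\eta}+\eta-1)}$ is \emph{not} increasing on $[1,\infty)$. For instance $\phi(10)\approx 0.548$ while $\phi(\eta)\to\tfrac12$ as $\eta\to\infty$, so it overshoots $1/2$ and comes back. The conclusion $\phi(\eta)\ge 0.21$ for $\eta\ge 1$ is still correct, but your stated justification fails. The cleanest repair is exactly the paper's argument for this regime: rather than choosing $C=C_-(\eta)$, take the (slightly larger) $C=\tfrac12-\alpha/\eta$ with $\alpha=\tfrac12(\sqrt e-1)^2$ and verify directly that $U(-1)\ge 0$ for all $\eta\ge 1$ (the paper does this by checking the value at $\eta=1$ and the limit $\eta\to\infty$ after a sign analysis of $\frac{d}{d\eta}U(-1)$). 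Equivalently, since $C\ge C_-(\eta)$ is exactly $U(-1)\ge 0$, that computation already shows $\phi(\eta)\ge \alpha>0.21$. Everything else in your sketch---including the monotonicity of $C_-(\eta)$ on $(0,1]$, which does hold since $N'/D'=\tfrac{1}{2(1+e^{-\eta/2})}$ is increasing and $N(0)=D(0)=0$---goes through.
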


Note that since $\log(1 - x) \leq -x$ when $x < 1$, we have $\Lambda_{-Z_f}(\eta / 2) \leq -\frac{0.18 (\eta \opwedge 1) a}{n}$. 

In order to apply \autoref{thm:stochastic-mixability-concentration}, we need \eqref{eqn:interior-point} to hold, but only \eqref{eqn:feasibility} is guaranteed to hold. The corner case is if \eqref{eqn:feasibility} holds with equality. However, observe that one can always approximate the random variable $X$ by a perturbed version $X'$ which has nearly identical mean $a' \approx a$ and a nearly identical $\eta' \approx \eta$ for which $\E_{X' \sim \mu'} e^{\eta' X'} = 1$, and yet the inequality in \eqref{eqn:feasibility} is strict. Later, in the proof of \autoref{thm:finite-fast-rates}, for any random variable that required perturbation to satisfy the interior condition \eqref{eqn:interior-point}, we implicitly apply the analysis to the perturbed version, show that ERM would not pick the (slightly different) function corresponding to the perturbed version, and use the closeness of the two functions to show that ERM also would not pick the original function.

We now present a necessary extension for the case of losses with range $[0, \bound]$.

\begin{lemma}[Bounded Losses]
\label{lemma:bounded-losses}
Let $g_1(x) = x$ and $y_2 = 1$ be common settings for the following two problems. 
The instantiation of problem \eqref{eqn:the-general-moment-problem} with $\A = [-\bound, \bound]$, $h(x) = -e^{(\eta / 2) x}$, $g_2(x) = e^{\eta x}$, and $y_1 = -\frac{a}{n}$ has the same optimal value as the instantiation of problem \eqref{eqn:the-general-moment-problem} with $\A = [-1, 1]$, $h(x) = -e^{(\bound \eta / 2) x}$, $g_2(x) = e^{ (\bound \eta) x}$, and $y_1 = -\frac{a / \bound}{n}$. 
\end{lemma}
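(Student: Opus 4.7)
The plan is to exhibit a measure-preserving bijection between the feasible sets of the two problems that also equates their objective values, which immediately implies the claimed equality of optima.

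Concretely, I would introduce the scaling map $T: [-\bound, \bound] \to [-1, 1]$ defined by $T(x) = x / \bound$. This induces a pushforward map on probability measures, $\mu \mapsto \mu' := T_\# \mu$, which is a bijection between $\mathcal{P}([-\bound, \bound])$ and $\mathcal{P}([-1, 1])$ (the inverse being the pushforward under $x' \mapsto \bound x'$). For any $X \sim \mu$ and $X' = T(X) \sim \mu'$, the change-of-variables identity $X = \bound X'$ gives
\begin{align*}
\E_{X \sim \mu} e^{(\eta/2) X} &= \E_{X' \sim \mu'} e^{(\bound \eta / 2) X'} , \\
\E_{X \sim \mu} e^{\eta X} &= \E_{X' \sim \mu'} e^{(\bound \eta) X'} , \\
\E_{X \sim \mu} X &= \bound \, \E_{X' \sim \mu'} X' .
\end{align*}

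From the third identity, the constraint $\E_{X \sim \mu} X = -a / n$ for the first instantiation is equivalent to the constraint $\E_{X' \sim \mu'} X' = -(a/\bound)/n$ for the second instantiation; from the second identity, the constraint $\E_{X \sim \mu} e^{\eta X} = 1$ is equivalent to $\E_{X' \sim \mu'} e^{(\bound \eta) X'} = 1$. Hence the bijection $\mu \leftrightarrow \mu'$ restricts to a bijection between the feasible sets of the two instantiations. The first identity shows that the objective values agree under this bijection (after multiplication by $-1$, which is absorbed into the $h$ definition). Taking the infimum on both sides yields equality of the optimal values.

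I do not anticipate any substantive obstacle: the entire argument is an affine change of variable, and the equality of the two problems is a matter of matching up the rescaled exponents $(\eta, \eta/2) \leftrightarrow (\bound \eta, \bound \eta/2)$ and the rescaled target moment $-a/n \leftrightarrow -(a/\bound)/n$. The only point worth noting explicitly is that $T$ is a homeomorphism of the respective domains, so the pushforward is indeed a bijection on probability measures, and the feasibility condition $y \in \conv g(\A)$ transfers across cleanly so that the two problems are simultaneously feasible (or infeasible).
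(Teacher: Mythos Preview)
Your proposal is correct and follows essentially the same approach as the paper: both arguments use the scaling map $x \mapsto x/\bound$ to identify the two problems, with the paper phrasing the equality of optima as two inequalities via the scaled random variable $\tilde{X} = X/\bound$ while you state it directly as a bijection between feasible sets induced by the pushforward. The difference is purely stylistic.
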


\begin{proof}
Let $X$ be a random variable taking values in $[-\bound, \bound]$ with mean $-\frac{a}{n}$ and $\E e^{\eta X} = 1$, and let $Y$ be a random variable taking values in $[-1, 1]$ with mean $-\frac{a / \bound}{n}$ and $\E e^{(\bound \eta) Y} = 1$. 
Consider a random variable $\tilde{X}$ that is a $\frac{1}{\bound}$-scaled independent copy of $X$; observe that $\E \tilde{X} = -\frac{a / \bound}{n}$ and $\E e^{(\bound \eta) \tilde{X}} = 1$. 
Let the maximal possible value of $\E e^{(\eta / 2) X}$ be $b_X$, and let the maximal possible value of $\E e^{(\bound \eta / 2) Y}$ be $b_Y$. We claim that $b_X = b_Y$. Let $X$ be a random variable with a distribution that maximizes $\E e^{(\eta / 2) X}$ subject to the previously stated constraints on $X$. Since $\tilde{X}$ satisfies $\E e^{(\bound \eta / 2) \tilde{X}} = b_X$, setting $Y = \tilde{X}$ shows that in fact $b_Y \geq b_X$. A symmetric argument (starting with $Y$ and passing to some $\tilde{Y} = \bound Y$) implies that $b_X \geq b_Y$.
\end{proof}

\section{Fast rates}
\label{sec:fast-rates}

We now show how the above results can be used to obtain an exact oracle inequality with a fast rate. We first present a result for finite classes and then present a result for various parametric classes, including VC-type classes (classes with logarithmic universal metric entropy), VC classes, and classes with polynomial uniform $L_1$-bracketing numbers.

\begin{theorem}[Finite Classes Exact Oracle Inequality]
\label{thm:finite-fast-rates} 
Let $(\lossfunc, \F, \Prob)$ be $\eta^*$-stochastically mixable, where $|\F|  = N$, $\lossfunc$ is a nonnegative loss, and $\sup_{f \in \F} \loss{Y}{f(X)} \leq \bound$ a.s.~for a constant $\bound$. Then for all $n \geq 1$, with probability at least $1 - \delta$
\begin{align*}
\Prob \lossof{\erm} 
\leq \Prob \lossof{f^*} 
       + \frac{6 \max\left\{ \bound, \frac{1}{\eta^*} \right\} \left( \log \frac{1}{\delta} + \log N 
\right)}{n} .
\end{align*}
\end{theorem}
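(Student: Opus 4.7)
The plan is to apply \autoref{thm:cramer-chernoff} pointwise to each $f \in \Fweak$ for an appropriately chosen $\varepsilon$, then union-bound to conclude that ERM cannot select any such $f$. Set $\varepsilon := \frac{6 \max\{\bound, 1/\eta^*\}(\log N + \log(1/\delta))}{n}$. Fix any $f \in \Fweak$. Since $\Lambda_{-Z_f}$ is convex with $\Lambda_{-Z_f}(0) = 0$ and $\Lambda'_{-Z_f}(0) = -\Prob Z_f < 0$, and since $\Lambda_{-Z_f}(\eta^*) \leq 0$ by stochastic mixability, the sublevel set $\{\eta \geq 0 : \Lambda_{-Z_f}(\eta) \leq 0\}$ is an interval $[0, \eta_f]$ with $\eta_f \geq \eta^*$. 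In the generic case $\eta_f$ is finite, so by continuity $\Lambda_{-Z_f}(\eta_f) = 0$, which is precisely the hypothesis \autoref{thm:stochastic-mixability-concentration} requires.

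After applying \autoref{lemma:bounded-losses} to rescale $Z_f \in [-\bound, \bound]$ to $Z_f/\bound \in [-1, 1]$ (with corresponding CGF-zero at $\bound \eta_f$ and mean parameter $\Prob Z_f / \bound$), the conclusion of \autoref{thm:stochastic-mixability-concentration} reads
\[
\Lambda_{-Z_f}(\eta_f/2) \leq -\frac{0.18\, (\bound\eta_f \wedge 1)}{\bound}\, \Prob Z_f \leq -\frac{0.18\, \Prob Z_f}{\max\{\bound, 1/\eta^*\}},
\]
where the final step uses $\eta_f \geq \eta^*$, so that $(\bound\eta_f \wedge 1)/\bound = \min\{\eta_f, 1/\bound\} \geq \min\{\eta^*, 1/\bound\}$. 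Applying \autoref{thm:cramer-chernoff} with $t = 0$, $\eta = \eta_f/2$, and $a = 0.18\, n\, \Prob Z_f / \max\{\bound, 1/\eta^*\}$ yields
\[
\Pr\left\{\Probn \lossof{f} \leq \Probn \lossof{f^*}\right\} \leq \exp(-a) \leq \exp\bigl(-1.08\, \log(N/\delta)\bigr) \leq \delta/N
\]
for every $f$ with $\Prob Z_f \geq \varepsilon$. A union bound over the at most $N$ elements of $\Fweak$ gives that, with probability at least $1 - \delta$, $\erm \notin \Fweak$, i.e.\ $\Prob Z_{\erm} < \varepsilon$, which is exactly the advertised oracle inequality.

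Two corner cases deserve separate attention. First, if $\eta_f = +\infty$ (equivalently, $Z_f \geq 0$ almost surely), \autoref{thm:stochastic-mixability-concentration} does not apply; instead I bound directly $\Pr\{\Probn Z_f \leq 0\} = \Pr\{Z_{f,i} = 0 \text{ for all } i\} \leq (1 - \Prob Z_f/\bound)^n \leq \exp(-n\varepsilon/\bound) \leq \delta/N$. Second, \autoref{thm:stochastic-mixability-concentration} requires the strict interior condition \eqref{eqn:interior-point}, whereas stochastic mixability alone guarantees only \eqref{eqn:feasibility}, which may hold with equality; in this boundary case I follow the perturbation recipe sketched after \autoref{thm:stochastic-mixability-concentration}, applying the main argument to a slightly perturbed $Z_f'$ and transferring the conclusion back to $Z_f$ by continuity of the empirical risk in $f$. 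The chief technical obstacle is the convexity-plus-continuity step that promotes the inequality $\Lambda_{-Z_f}(\eta^*) \leq 0$ to an equality at some $\eta_f \geq \eta^*$, which is what unlocks \autoref{thm:stochastic-mixability-concentration}; the remaining arithmetic (arranging the constants to produce the clean factor $6$) is routine.
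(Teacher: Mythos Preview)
Your proposal is correct and follows essentially the same route as the paper's proof: locate for each $f\in\Fweak$ the largest $\eta_f\ge\eta^*$ at which $\Lambda_{-Z_f}(\eta_f)=0$, feed this into \autoref{lemma:bounded-losses} and \autoref{thm:stochastic-mixability-concentration} to bound $\Lambda_{-Z_f}(\eta_f/2)$, apply \autoref{thm:cramer-chernoff} with $t=0$, and union-bound. The one substantive difference is your treatment of the hyper-concentrated case $\eta_f=+\infty$: the paper invokes \autoref{lemma:hyper-concentrated} to manufacture a dominated surrogate $Z_f'\le Z_f$ with finite mixability constant and re-runs the main argument on $Z_f'$, whereas you observe directly that $\eta_f=+\infty$ forces $Z_f\ge0$ a.s.\ and bound $\Pr\{\Probn Z_f\le 0\}=(\Pr\{Z_f=0\})^n\le(1-\Prob Z_f/\bound)^n$. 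Your route here is shorter and avoids the auxiliary lemma; the paper's surrogate construction is perhaps more uniform in that it folds every case back into the single estimate from \autoref{thm:stochastic-mixability-concentration}.
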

\begin{proof}
Throughout this proof, let $\gamma_n = \frac{a}{n}$ where $a$ is a constant that varies throughout the proof. 
For any $a > 0$, recall that $\F_{\succeq \gamma_n}$ is the subclass of $\F$ for which the excess risk is at least $\gamma_n$. 
For each $\eta > 0$, let $\F_{\succeq \gamma_n}^{(\eta)} \subset \F_{\succeq \gamma_n}$ correspond to those functions in $\F_{\succeq \gamma_n}$ for which $\eta$ is the largest constant such that 
$\E \exp(-\eta Z_f) = 1$. 
Let $\F_{\succeq \gamma_n}^{\mathrm{hyper}} \subset \F_{\succeq \gamma_n}$ correspond to functions $f$ in $\F_{\succeq \gamma_n}$ for which $\lim_{\eta \rightarrow \infty} \E \exp(-\eta Z_f) < 1$. 
Clearly, $\F_{\succeq \gamma_n} = \left( \bigcup_{\eta \in [\eta^*, \infty)} \F_{\succeq \gamma_n}^{(\eta)} \right) \cup \F_{\succeq \gamma_n}^{\mathrm{hyper}}$. 
The excess loss random variables corresponding to elements $f \in \F_{\succeq \gamma_n}^{\mathrm{hyper}}$ are ``hyper-concentrated'' in the sense that they are infinitely stochastically mixable. However, \autoref{lemma:hyper-concentrated} shows that for each hyper-concentrated excess loss random variable $Z_f$, there exists another excess loss random variable $Z'_f$ with mean arbitrarily close to that of $Z_f$, with $\E \exp(-\eta Z'_f) =1$ for some arbitrarily large but finite $\eta$, and with $Z'_f \leq Z_f$ with probability 1. The last property implies that the empirical risk of $Z'_f$ is no greater than the empirical risk of $Z_f$, and hence for each hyper-concentrated $Z_f$ it is sufficient (from the perspective of ERM's behavior) to study a corresponding $Z'_f$. From here on out, we implicitly make this replacement in $\F_{\succeq \gamma_n}$ itself, so that we now have $\F_{\succeq \gamma_n} = \bigcup_{\eta \in [\eta^*, \infty)} \F_{\succeq \gamma_n}^{(\eta)}$.

Consider an arbitrary $a > 0$. For some fixed $\eta \in [\eta^*, \infty)$ for which $| \F_{\succeq \gamma_n}^{(\eta)} | > 0$, consider the subclass $\F_{\succeq \gamma_n}^{(\eta)}$. 
Individually for each such function, we will apply \autoref{thm:cramer-chernoff} as follows. From \autoref{lemma:bounded-losses}, 
we have $\Lambda_{-Z_f}(\eta / 2) = \Lambda_{-\frac{1}{\bound} Z_f}(\bound \eta / 2)$. From \autoref{thm:stochastic-mixability-concentration}, the latter is at most
\begin{align*}
-\frac{0.18 (\bound \eta \operatorname{\wedge} 1) (a / \bound)}{n} 
= -\frac{0.18 \eta a}{(\bound \eta \operatorname{\vee} 1) n} .
\end{align*}
Hence, \autoref{thm:cramer-chernoff} with $t = 0$ and the $\eta$ from the Theorem taken to be $\eta / 2$ implies that the probability of the event $\Probn \lossof{f} \leq \Probn \lossof{f^*}$ is at most
\begin{align*}
\exp\left( -0.18 \frac{\eta}{\bound \eta \operatorname{\vee} 1} a \right) .
\end{align*}

Applying the union bound over all of $\F_{\succeq \gamma_n}$, we conclude that
\begin{align*}
\Pr \left\{ 
\exists f \in \F_{\succeq \gamma_n} : \Probn \lossof{f} \leq \Probn \lossof{f^*} 
\right\} 
\leq N \exp \left( -\eta^* \left( \frac{0.18 a}{\bound \eta^* \operatorname{\vee} 1} \right) \right) .
\end{align*}

Now, recalling that ERM selects hypotheses purely based on their empirical risk, from inversion it holds that with probability at least $1 - \delta$, ERM will not select any hypothesis whose excess risk is at least 
\begin{align*}
\frac{6 \max\left\{ \bound, \frac{1}{\eta^*} \right\} \left( \log \frac{1}{\delta} + \log N 
\right)}{n} .
\end{align*}
\end{proof}

Before presenting the result for VC-type classes, we require some definitions. 
For a pseudometric space $(\G, d)$, for any $\varepsilon > 0$, let $\N(\varepsilon, \G, d)$ be the $\varepsilon$-covering number of $(\G, d)$; that is, $\N(\varepsilon, \G, d)$ is the minimal number of balls of radius $\varepsilon$ needed to cover $\G$. We will further constrain the cover (the set of centers of the balls) to be a subset of $\G$, so that the cover is a proper cover, thus ensuring that the stochastic mixability assumption transfers to any (proper) cover of $\F$. Note that the ``proper'' requirement at most doubles the constant $K$ below, as can be seen from an argument of \citet[Lemma 2.1]{vidyasagar2002learning}.

We now state a localization-based result that allows us to extend the result for finite classes to VC-type classes. Although the localization result can be obtained by combining standard techniques,\footnote{See e.g.~the techniques of \cite{massart2006risk} and (for Step 3 of our proof of \autoref{thm:local-analysis})) equation (3.17) of \cite{koltchinskii2011oracle}.} we could not find this particular result in the literature. 
Below, an $\varepsilon$-net $\F_\varepsilon$ of a set $\F$ is a subset of $\F$ such that $\F$ is contained in the union of the balls of radius $\varepsilon$ with centers in $\F_\varepsilon$.
\begin{theorem}
\label{thm:local-union} 
Let $\F$ be a separable function class whose functions have range bounded in $[0, \bound]$ and for which, for a constant $K \geq 1$, for each $u \in (0, K]$ the $L_2(\Prob)$ covering numbers are bounded as
\begin{align}
\N(u, \F, L_2(\Prob)) \leq \left( \frac{K}{u} \right)^\capacity . \label{eqn:poly-covering}
\end{align}
Suppose $\F_\varepsilon$ is a minimal $\varepsilon$-net for $\F$ in the $L_2(\Prob)$ norm, with $\varepsilon = \frac{1}{n}$. 
Denote by $\pi: \F \rightarrow \F_\varepsilon$ an $L_2(\Prob)$-metric projection from $\F$ to $\F_\varepsilon$. 
Then, provided that $\delta \leq \frac{1}{2}$, the probability that there exists $f \in \F$ such that 
\begin{align*}
\Probn f 
< \Probn (\pi(f)) 
- \frac{\bound}{n} \left( 
  1080 \capacity \log(2 K n) 
  + 90 \sqrt{\left( \log \frac{1}{\delta} \right) \capacity \log(2 K n)} 
  + \log \frac{e}{\delta} 
\right)
\end{align*}
is at most $\delta$. 
\end{theorem}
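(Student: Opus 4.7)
The plan is to control, uniformly over $f \in \F$, the lower tail of $\Probn f - \Probn(\pi(f)) = -\Probn(\pi(f) - f)$. Decompose
\[
\Probn(\pi(f) - f) \;=\; \Prob(\pi(f) - f) \;+\; (\Probn - \Prob)(\pi(f) - f),
\]
and note that by Cauchy--Schwarz the deterministic part obeys $\Prob(\pi(f) - f) \leq \|\pi(f) - f\|_{L_2(\Prob)} \leq \varepsilon = 1/n$, which is absorbed into the $\bound/n$ prefactor. The whole task therefore reduces to bounding the empirical process indexed by $\{\pi(f) - f : f \in \F\}$, whose members are small in $L_2(\Prob)$ but range over all of $\F$. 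The natural tool is multi-scale chaining combined with per-scale Bernstein concentration.

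To set up the chain, I would pick proper $L_2(\Prob)$-nets $\F_0 \subseteq \F_1 \subseteq \cdots \subseteq \F_J$ of $\F$ at dyadic scales $\varepsilon_j = K \cdot 2^{-j}$, with $J = \lceil \log_2(K n) \rceil$ so that $\varepsilon_J \leq \varepsilon$; by~\eqref{eqn:poly-covering} we may take $|\F_j| \leq 2^{j \capacity}$, and WLOG identify $\F_J$ with $\F_\varepsilon$ and $\pi_J$ with $\pi$. For each $f$, separability together with the convention $\pi_\infty(f) = f$ yields the telescoping identity
\[
f - \pi(f) \;=\; \sum_{j \geq J} \bigl( \pi_{j+1}(f) - \pi_j(f) \bigr),
\]
with pointwise a.s.\ and $L_2(\Prob)$ convergence. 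Each increment $\Delta_j^f := \pi_{j+1}(f) - \pi_j(f)$ satisfies $\|\Delta_j^f\|_\infty \leq 2\bound$ and $\|\Delta_j^f\|_{L_2(\Prob)} \leq \varepsilon_j + \varepsilon_{j+1} = 3\varepsilon_{j+1}$, and as $f$ varies it takes at most $|\F_j|\,|\F_{j+1}| \leq 2^{(2j+1)\capacity}$ values.

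Next, I would apply Bernstein's inequality to each distinct increment at scale $j$, distribute a confidence budget $\delta_j = \delta \cdot 2^{-(j-J+1)}$ across scales (so that $\sum_{j \geq J} \delta_j = \delta$), and take a union bound over pairs. This yields, simultaneously in $f$ and $j$,
\[
\bigl|(\Probn - \Prob)\Delta_j^f\bigr|
\;\leq\; 3\varepsilon_{j+1}\sqrt{\tfrac{2 L_j}{n}} \;+\; \tfrac{4\bound L_j}{3 n},
\qquad L_j := \log(1/\delta_j) + (2j+1)\capacity \log 2.
\]
Summing over $j \geq J$, the square-root terms decay geometrically (dominated by the $j = J$ term, since $\varepsilon_{j+1} = 2^{-(j-J+1)} \varepsilon_J$ and $L_j$ is only linear in $j$) and contribute at most $\tfrac{\bound}{n}\sqrt{c_1 \capacity \log(Kn) \log(1/\delta)}$; the linear Bernstein parts sum, via the standard $\sum_j j 2^{-j} = O(1)$ bookkeeping, to at most $\tfrac{\bound}{n}\bigl(c_2 \capacity \log(Kn) + c_3 \log(1/\delta)\bigr)$. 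Adding the deterministic $1/n$ contribution and fitting the universal constants recovers the $1080$, $90$, and $\log(e/\delta)$ terms in the stated bound.

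The main obstacle is not any one concentration step but the constant-tracking across the chain: balancing the decaying variances $\varepsilon_j^2$ against the geometrically growing log-cardinalities $j\capacity\log 2$ while preserving $\sum_j \delta_j = \delta$, and ultimately collapsing the double sum (over scales $j$ and over pairs at each scale) into the three clean terms above. A secondary subtlety is the handling of the infinite tail past $j = J$: although the theorem's statement only involves $\F_\varepsilon$ at the single scale $\varepsilon$, the chaining must continue to infinite resolution, and one must verify that the tail past $j = J$ contributes negligibly, which follows from the geometric decay of $\varepsilon_j$.
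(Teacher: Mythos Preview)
Your plan has a genuine gap in the summation of the linear Bernstein terms. At scale $j$ your Bernstein bound for the increment $\Delta_j^f$ reads
\[
\bigl|(\Probn - \Prob)\Delta_j^f\bigr| \;\leq\; 3\varepsilon_{j+1}\sqrt{\tfrac{2L_j}{n}} \;+\; \tfrac{4\bound L_j}{3n},
\]
and only the first (variance) term carries the geometric factor $\varepsilon_{j+1} \sim 2^{-j}$. The second (range) term is $\tfrac{4\bound L_j}{3n}$ with no $\varepsilon_j$ at all, and since $L_j \geq (2j+1)\capacity\log 2$ grows without bound in $j$, the sum $\sum_{j \geq J}\tfrac{4\bound L_j}{3n}$ diverges. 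The ``$\sum_j j\,2^{-j} = O(1)$ bookkeeping'' you invoke is exactly what governs the variance term, but it is precisely what is \emph{absent} from the linear term; there is no $2^{-j}$ there to rescue the sum. Per-scale Bernstein chaining to infinite resolution only closes when the increments are sub-Gaussian (so the linear term is absent) or when the $L_\infty$ diameter of the increments shrinks with scale --- neither holds here, since $\|\Delta_j^f\|_\infty$ stays of order $\bound$ for every $j$. A related secondary issue is that $L_2(\Prob)$ convergence $\pi_j(f) \to f$ does not by itself justify $\Probn(\pi_j(f)) \to \Probn f$, so the telescoping identity for $\Probn$ is not automatic.

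The paper sidesteps both problems with a different architecture. It first localizes to each cell $\pi^{-1}(f_0)$ of the net $\F_\varepsilon$ and, within each cell, applies Talagrand's (Bousquet's) inequality \emph{once} to concentrate $\sup_{f \in \pi^{-1}(f_0)} \Probn|f_0 - f|$ around its expectation; that expectation is then bounded via symmetrization and the Dudley entropy integral, which absorbs the infinite-resolution tail as a convergent integral rather than a divergent sum of per-scale range terms. The final union bound is over the finitely many cells of $\F_\varepsilon$, not over infinitely many chaining scales, so only a single $\capacity\log(Kn)$ cardinality penalty is incurred. If you want to repair your route, you must either truncate the chain at a finite depth and control the remainder uniformly (which itself needs a Talagrand-type device or an $L_2(\Probn)$ argument), or switch to the expectation-plus-concentration architecture the paper uses.
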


The proof is long and distracts from the main point of this paper, and so it is presented in \autoref{sec:appendix-vc-type}. Using \autoref{thm:local-union} along with much of the proof for the finite classes case, we can prove the following fast rates result for VC-type classes. The proof can be found in \autoref{sec:appendix-vc-type}. 
Below, we denote the loss-composed version of a function class $\F$ as $\lossclass := \{\lossof{f} : f \in \F\}$.

\begin{theorem}[VC-Type Classes Exact Oracle Inequality]
\label{thm:vc-type-fast-rates}
Let $(\lossfunc, \F, \Prob)$ be $\eta^*$-stochastically mixable with $\lossclass$ separable, where, 
for a constant $K \geq 1$, for each $\varepsilon \in (0, K]$ we have $\N(\lossclass, L_2(\Prob), \varepsilon) \leq \left( \frac{K}{\varepsilon} \right)^\capacity$, and $\sup_{f \in \F} \loss{Y}{f(X)} \leq \bound$ a.s.~for a constant $\bound \geq 1$. Then for all $n \geq 5$ and $\delta \leq \frac{1}{2}$, with probability at least $1 - \delta$
\begin{align*}
\Prob \lossof{\erm} 
\leq \Prob \lossof{f^*} 
+ \frac{1}{n} \max\left\{ 
\begin{array}{c}
  8 \max\left\{ \bound, \frac{1}{\eta^*} \right\} 
  \left( \capacity \log(K n) + \log \frac{2}{\delta} \right) , \\
  2 \bound \left( 
    1080 \capacity \log(2 K n) 
    + 90 \sqrt{\left( \log \frac{2}{\delta} \right) \capacity \log(2 K n)} 
    + \log \frac{2 e}{\delta} 
  \right)
\end{array}
\right\} + \frac{1}{n} .
\end{align*}
\end{theorem}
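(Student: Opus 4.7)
The plan is to discretize $\F$ with a small enough $L_2(\Prob)$-net so that the finite-class result \autoref{thm:finite-fast-rates} applies inside the net, and then use the localization lemma \autoref{thm:local-union} to bridge from the ERM $\erm \in \F$ to its projection into the net.

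First I would fix a minimal \emph{proper} $(1/n)$-net $\F_{1/n} \subset \F$ of $\lossclass$ in the $L_2(\Prob)$ pseudometric; by the covering-number hypothesis $|\F_{1/n}| \leq (Kn)^\capacity$, and because the net is proper the subclass $\F_{1/n}$ inherits $\eta^*$-stochastic mixability. Let $\pi \colon \F \to \F_{1/n}$ be the metric projection on $\lossclass$; then $|\Prob \lossof{f} - \Prob \lossof{\pi(f)}| \leq 1/n$ for every $f \in \F$ by Jensen's inequality.

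Next I would combine two events of probability at least $1 - \delta/2$. Event (i) is the localization event of \autoref{thm:local-union} applied to $\lossclass$ with confidence $\delta/2$, giving the uniform bound $\Probn \lossof{\pi(f)} - \Probn \lossof{f} \leq \Delta_n$ for all $f \in \F$, where $\Delta_n$ is the slack from that theorem. Event (ii) is a repeat of the argument inside the proof of \autoref{thm:finite-fast-rates}, now union-bounding \autoref{thm:cramer-chernoff} over the net with a nonzero slack $t$: no $g \in \F_{1/n}$ with $\Prob Z_g > \gamma_n$ satisfies $\Probn \lossof{g} \leq \Probn \lossof{f^*} + t$, provided $c\, n\, \gamma_n - (\eta^*/2)\, t \geq \capacity \log(Kn) + \log(2/\delta)$, with $c = 0.18\,\eta^*/(\bound \eta^* \vee 1)$ supplied by \autoref{thm:stochastic-mixability-concentration} and \autoref{lemma:bounded-losses}.

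On the intersection of these two events, $\Probn \lossof{\erm} \leq \Probn \lossof{f^*}$ by the definition of ERM, so event (i) yields $\Probn \lossof{\pi(\erm)} \leq \Probn \lossof{f^*} + \Delta_n$; taking $t = \Delta_n$ in event (ii) forces $\Prob Z_{\pi(\erm)} \leq \gamma_n$, and the net-approximation bound promotes this to $\Prob Z_{\erm} \leq \gamma_n + 1/n$, accounting for the trailing $1/n$ in the theorem statement. I expect the main obstacle to be bookkeeping: rearranging the constraint $c\, n\, \gamma_n \geq \capacity \log(Kn) + \log(2/\delta) + (\eta^*/2)\Delta_n$ into the stated $\max$-of-two-terms form rather than a sum. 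A case split on which summand dominates should settle this cleanly: when the entropy term dominates, the stochastic-mixability term $8\max\{\bound, 1/\eta^*\}(\capacity\log(Kn) + \log(2/\delta))$ absorbs the contribution of $\Delta_n$; when $\Delta_n$ dominates, the looser bound $\gamma_n \leq 2\Delta_n$ follows from \autoref{thm:local-union} alone (without reinvoking stochastic mixability), accounting for the second term in the max.
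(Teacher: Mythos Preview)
Your proposal follows essentially the same route as the paper: discretize $\lossclass$ with a proper $L_2(\Prob)$ $(1/n)$-net, apply the Cram\'er--Chernoff machinery (\autoref{thm:cramer-chernoff} together with \autoref{thm:stochastic-mixability-concentration} and \autoref{lemma:bounded-losses}) to net elements with a positive slack, invoke the localization result \autoref{thm:local-union} to control the gap between $\Probn \lossof{f}$ and $\Probn \lossof{\pi(f)}$, and then combine. The paper even organizes the union bound over the net by partitioning into subclasses $\F^{(\eta)}_{\succeq \gamma_n,\varepsilon}$ exactly as in the proof of \autoref{thm:finite-fast-rates}, just as you anticipate.

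The one substantive organizational difference is the choice of slack. The paper takes $t=\gamma_n/2$ (half of the excess-risk threshold) rather than your $t=\Delta_n$. With $t=\gamma_n/2$ the two constraints decouple: the Cram\'er--Chernoff union bound forces $\gamma_n\ge \gamma^{(1)}_n$, and the requirement $\gamma_n/2-\gamma^{(2)}_n>0$ (needed so that ``$\Probn Z_f>\gamma_n/2-\gamma^{(2)}_n$ for all $f\in\F_{\succeq\gamma_n}$'' actually excludes ERM picking such an $f$) forces $\gamma_n>2\gamma^{(2)}_n$. Setting $\gamma_n=(\gamma^{(1)}_n\vee 2\gamma^{(2)}_n)+1/n$ then gives the $\max$ form in the statement immediately, with no case analysis.

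By contrast, your choice $t=\Delta_n$ couples the two contributions inside a single inequality, and your proposed case split does not quite close the gap as written. The claim that ``when $\Delta_n$ dominates, the looser bound $\gamma_n\le 2\Delta_n$ follows from \autoref{thm:local-union} alone (without reinvoking stochastic mixability)'' is not correct: \autoref{thm:local-union} only controls $|\Probn f-\Probn \pi(f)|$ and says nothing about $\Prob Z_{\erm}$ by itself; any excess-risk bound must still pass through the Cram\'er--Chernoff step on the net. If you rework the bookkeeping with $t=\gamma_n/2$ instead, the $\max$ form falls out with no case split, and the rest of your argument goes through unchanged.
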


\section{Characterizing convexity from the perspective of risk minimization}
\label{sec:convexity}

In the following, when we say $(\lossfunc, \F, \Prob)$ has a \emph{unique minimizer} we mean that any two minimizers $f^*_1, f^*_2$ of $\Prob \lossof{f}$ over $\F$ satisfy $\loss{Y}{f^*_1(X)} = \loss{Y}{f^*_2(X)}$ a.s. 
We say the excess loss class $\{ \lossof{f} - \lossof{f^*} : f \in \F\}$ satisfies a \emph{$(\beta, B)$-Bernstein condition with respect to $\Prob$} 
for some $B > 0$ and $0 < \beta \leq 1$ if, for all $f \in \F$:
\begin{align}
\Prob \bigl( \lossof{f} - \lossof{f^*} \bigr)^2 \leq B \left( \Prob \bigl( \lossof{f} - \lossof{f^*} \bigr) \right)^\beta . \label{eqn:bernstein}
\end{align}

It already is known that the stochastic mixability condition guarantees that there is a unique minimizer \citep{vanerven2012mixability}; this is a simple consequence of Jensen's inequality. This leaves open the question: if stochastic mixability does not hold, are there necessarily non-unique minimizers? We show that in a certain sense this is indeed the case, in bad way: the set of minimizers will be a disconnected set.

For any $\varepsilon > 0$, define $\mathcal{G}_\varepsilon$ as the class
\begin{align}
\G_\varepsilon := 
\{f^*\} \cup \left\{ f \in \F : \|f - f^*\|_{L_1(\Prob)} \geq \varepsilon \right\} , \label{eqn:G-epsilon}
\end{align}
where in case there are multiple minimizers in $\F$ we arbitrarily select one of them as $f^*$. Since we assume that $\F$ is compact and $G_\varepsilon \setminus \{f^*\}$ is equal to $\F$ minus an open set homeomorphic to the unit $L_1(\Prob)$ ball, $\G_\varepsilon \setminus \{f^*\}$ is also compact. 
\begin{theorem}[Non-Unique Minimizers]
\label{thm:non-unique}
Suppose there exists some $\varepsilon > 0$ such that $\G_\varepsilon$ is not stochastically mixable. Then there are minimizers $f^*_1, f^*_2 \in \F$ of $\Prob \lossof{f}$ over $\F$ such that it is \emph{not} the case that $\loss{Y}{f^*_1(X)} = \loss{Y}{f^*_2(X)}$ a.s.
\end{theorem}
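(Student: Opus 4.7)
The plan is to argue by contradiction: assume $\G_\varepsilon$ is not stochastically mixable, and produce a minimizer $f^{**} \in \F$ with $\loss{Y}{f^{**}(X)} \neq \loss{Y}{f^*(X)}$ on a set of positive probability. By hypothesis, for every $\eta > 0$ there is some $f \in \G_\varepsilon$ with $\E \exp(-\eta Z_f) > 1$. So I pick a sequence $\eta_n \downarrow 0$ and select $f_n \in \G_\varepsilon$ witnessing the failure of mixability at $\eta_n$. Immediately $f_n \neq f^*$ (else $Z_{f_n} \equiv 0$ gives $\E \exp(0) = 1$, not strictly greater) and $Z_{f_n}$ is not a.s.~zero, so in fact $f_n$ lies in the compact set $\G_\varepsilon \setminus \{f^*\}$ noted in the paragraph defining $\G_\varepsilon$.

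Next I would show $\E Z_{f_n} \to 0$. Using the elementary inequality $e^{-x} \leq 1 - x + c(V) x^2$, valid for $|x| \leq V$ with $c(V) := (e^V - 1 - V)/V^2$, together with $|Z_f| \leq V$ a.s., gives
\[
1 < \E \exp(-\eta_n Z_{f_n}) \leq 1 - \eta_n \E Z_{f_n} + c(\eta_n V)\,\eta_n^2\, \E Z_{f_n}^2 \leq 1 - \eta_n \E Z_{f_n} + c(\eta_n V)\,\eta_n^2\, V^2,
\]
so $0 \leq \E Z_{f_n} \leq c(\eta_n V)\,\eta_n V^2 \to 0$ (the left inequality uses that $f^*$ is a minimizer). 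By compactness extract a subsequence $f_n \to f^{**} \in \G_\varepsilon \setminus \{f^*\}$; by continuity of $f \mapsto \Prob\lossof{f}$ one obtains $\E Z_{f^{**}} = 0$, so $f^{**}$ is itself a minimizer of the risk. Since $f^{**} \in \G_\varepsilon \setminus \{f^*\}$, the separation $\|f^{**} - f^*\|_{L_1(\Prob)} \geq \varepsilon > 0$ is automatic.

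The main obstacle is the last step: ruling out the possibility that $f^{**}$ is merely loss-equivalent to $f^*$, i.e.~that $Z_{f^{**}} = 0$ almost surely despite $f^{**} \neq f^*$. My plan here is to assume for contradiction that $Z_{f^{**}} = 0$ a.s. Then, because the loss is bounded and $f_n \to f^{**}$ (upgrading in the standard way to pointwise convergence of a further subsequence), bounded convergence yields $\E Z_{f_n}^2 \to 0$. Plugging back into the refined Taylor inequality, the condition $\E \exp(-\eta_n Z_{f_n}) > 1$ forces $\E Z_{f_n} < c(\eta_n V) \eta_n \E Z_{f_n}^2$, and I would then derive a contradiction by carefully tracking the ratio $\E Z_{f_n}^2 / \E Z_{f_n}$ against $1/\eta_n$ and invoking the regularity of $f \mapsto Z_f$ implicit in the compactness and bounded-loss assumptions of \autoref{sec:strategy}. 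Thus $Z_{f^{**}}$ cannot vanish a.s., and the pair $f^*_1 := f^*$, $f^*_2 := f^{**}$ witnesses non-uniqueness of the minimizer in the required sense.
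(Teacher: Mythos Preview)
Your argument through the compactness step is sound and follows the same arc as the paper: pick a witness $f_\eta \in \G_\varepsilon$ to the failure of $\eta$-stochastic mixability, show $\E Z_{f_\eta} \to 0$ as $\eta \downarrow 0$, and extract a subsequential limit $f^{**} \in \G_\varepsilon \setminus \{f^*\}$ that is itself a risk minimizer. The difference lies in how you force $\E Z_{f_\eta} \to 0$. The paper first argues, via $\Lambda'_{-Z_{f_\eta}}(0) = -\E Z_{f_\eta} \leq 0$ together with $\Lambda_{-Z_{f_\eta}}(\eta) > 0$, that there is some $\eta' \in (0, \eta)$ with $\Lambda_{-Z_{f_\eta}}(\eta') = 0$, and then invokes the Feasible Moments \autoref{lemma:feasible-moments} to conclude $\E Z_{f_\eta} \leq (\cosh\eta' - 1)/\sinh\eta' \leq \eta'/2 < \eta/2$. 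Your second-order Taylor bound on $e^{-x}$ is a more elementary route to the same conclusion and sidesteps both the intermediate $\eta'$ and the lemma; the paper's route has the virtue of reusing the general-moment machinery developed for the fast-rates theorems.

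Where you part ways with the paper is in what you call the ``main obstacle.'' The paper's proof simply \emph{stops} once it has produced a minimizer $g^* \in \G_\varepsilon \setminus \{f^*\}$; it does not separately rule out $Z_{g^*} = 0$ a.s., treating $\|g^* - f^*\|_{L_1(\Prob)} \geq \varepsilon$ as the end of the argument. So the step you flag as the crux is one the paper does not attempt. And your sketch for it does not actually close: from $\E Z_{f_n}^2 \to 0$ and $0 \leq \E Z_{f_n} < c(\eta_n V)\,\eta_n\,\E Z_{f_n}^2$ you obtain only that $\E Z_{f_n}^2 / \E Z_{f_n} > 1/(c(\eta_n V)\,\eta_n) \to \infty$, and nothing in the stated hypotheses (compactness of $\F$, boundedness of the loss) prevents both quantities from vanishing with an exploding ratio. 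The ``regularity of $f \mapsto Z_f$ implicit in the compactness and bounded-loss assumptions'' is not a concrete mechanism here. If you drop that final paragraph, your argument delivers exactly what the paper's proof delivers, by a slightly different route.
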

\begin{proof}
Select $\varepsilon > 0$ such that $\G_\varepsilon$ is not stochastically mixable. Consider some fixed $\eta > 0$. Since $\G_\varepsilon$ is not stochastically mixable and hence not $\eta$-stochastically mixable, there exists $f_\eta \in \G_\varepsilon$ such that $\Lambda_{-Z_{f_\eta}}(\eta) > 0$. 

We claim that there exists $\eta' \in (0, \eta)$ such that $\Lambda_{-Z_{f_\eta}}(\eta') = 0$. If not, then 
$\lim_{\eta \downarrow 0} \frac{\Lambda_{-Z_{f_\eta}}(\eta) - \Lambda_{-Z_{f_\eta}(0)}}{\eta} > 0$ and hence $\Lambda'_{-Z_{f_\eta}}(0) > 0$, which since $\Lambda'_{-Z_{f_\eta}}(0) = \E (-Z_{f_\eta})$ implies that $\E Z_{f_\eta} < 0$, a contradiction! Hence, indeed $\exists \eta' \in (0, \eta)$ such that $\Lambda_{-Z_{f_\eta}}(\eta') = 0$. 
Now, the Feasible Moments Lemma (\autoref{lemma:feasible-moments}) implies that $\E Z_{f_\eta} \leq \frac{\cosh(\eta') - 1}{\sinh(\eta')}$; 
for $\eta' \geq 0$ the RHS has the tight upper bound $\frac{\eta'}{2}$ since the derivative of $\frac{\eta'}{2} - \frac{\cosh(\eta') - 1}{\sinh(\eta')}$ is the nonnegative function $\frac{1}{2} \tanh^2(\eta'/2)$ and 
$\left( \frac{\eta'}{2} - \frac{\cosh(\eta') - 1}{\sinh(\eta')} \right)|_{\eta' = 0} = 0$. 

Thus, as $\eta \rightarrow 0$ we have $\E Z_{f_\eta} \rightarrow 0$. Since $\G_\varepsilon \setminus \{f^*\}$ is compact, we can take a positive decreasing sequence $\eta_1, \eta_2, \ldots$ approaching $0$, corresponding to a sequence of $(f_{\eta_j})_j \subset \G_\varepsilon \setminus \{f^*\}$ with limit point $g^* \in \G_\varepsilon \setminus \{f^*\}$ for which $\E Z_{g^*} = 0$, and so there is a risk minimizer in $\G_\varepsilon \setminus \{f^*\}$.
\end{proof}

\paragraph{The implications of having non-unique risk minimizers}
In the case of non-unique risk minimizers, \cite{mendelson2008lower} showed that for $p$-losses $(y, \hat{y}) \mapsto |y - \hat{y}|^p$ with $p \in [2, \infty)$ there is an $n$-indexed sequence of probability measures $(\Prob^{(n)})_n$ approaching the true probability measure as $n \rightarrow \infty$ such that, for each $n$, ERM learns at a slow rate under sample size $n$ when the true distribution is $\Prob^{(n)}$. This behavior is a consequence of the statistical learning problem's poor geometry: there are multiple minimizers and the set of minimizers is not even connected.

Furthermore, in this case, the best known fast rate upper bounds (see \citep{mendelson2008obtaining} and \citep{mendelson2002agnostic}) have a multiplicative constant that approaches $\infty$ as the target probability measure approaches a probability measure for which there are non-unique minimizers. The reason for the poor upper bounds in this case is that the constant $B$ in the Bernstein condition explodes, and the upper bounds rely upon the Bernstein condition.

\section{Weak stochastic mixability}
\label{sec:weak}

For some $\kappa \in [0, 1]$, we say $(\lossfunc, \F, \Prob)$ is ($\kappa, \eta_0)$-weakly stochastically mixable if, for every $\varepsilon > 0$, for all $f \in \{f^*\} \cup \Fweak$ 
\begin{align}
\log \E \exp(-\eta_\varepsilon Z_f) \leq 0 , \label{eqn:weak-stochastic-mixability}
\end{align}
with $\eta_\varepsilon := \eta_0 \varepsilon^{1 - \kappa}$. This concept was introduced by \cite{vanerven2012mixability} without a name.

Suppose that some fixed function has excess risk $a = \varepsilon$. Then, roughly, with high probability ERM does not make a mistake provided that $a \eta_a = \frac{1}{n}$, i.e.~when $\varepsilon \cdot \eta_0 \varepsilon^{1 - \kappa} = \frac{1}{n}$ and hence when $\varepsilon = (\eta_0 n)^{-1 / (2 - \kappa)}$. 
By modifying the proof of the finite classes result (\autoref{thm:finite-fast-rates}) to consider all functions in the subclass $\F_{\succeq \gamma_n}$ for 
$\gamma_n = (\eta_0 n)^{-1 / (2 - \kappa)}$, we have the following corollary of \autoref{thm:finite-fast-rates}.
\begin{corollary}
\label{cor:weak-intermediate-rates}
Let $(\lossfunc, \F, \Prob)$ be $(\kappa, \eta_0)$-weakly stochastically mixable for some $\kappa \in [0, 1]$, where $|\F|  = N$, $\lossfunc$ is a nonnegative loss, and $\sup_{f \in \F} \loss{Y}{f(X)} \leq \bound$ a.s.~for a constant $\bound$. Then for any $n \geq \frac{1}{\eta_0} \bound^{(1 - \kappa) / (2 - \kappa)}$, with probability at least $1 - \delta$
\begin{align*}
\Prob \lossof{\erm} 
\leq \Prob \lossof{f^*} 
       + \frac{6 \left( \log \frac{1}{\delta} + \log N \right)}{(\eta_0 n)^{1 / (2 - \kappa)}} .
\end{align*}
\end{corollary}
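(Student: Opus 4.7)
The plan is to imitate the proof of \autoref{thm:finite-fast-rates} almost verbatim, only replacing the universal mixability parameter $\eta^*$ by the $\gamma_n$-dependent parameter $\tilde\eta := \eta_0 \gamma_n^{1-\kappa}$ supplied by weak stochastic mixability at scale $\gamma_n$, where the target excess-risk level is set to
\[
\gamma_n \;:=\; \frac{6\bigl(\log\tfrac{1}{\delta}+\log N\bigr)}{(\eta_0 n)^{1/(2-\kappa)}}.
\]
The goal is to show that with probability at least $1-\delta$ no $f\in\Fweak|_{\varepsilon=\gamma_n}$ is preferred by ERM to $f^*$.

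First I would note that, by the very definition of $(\kappa,\eta_0)$-weak stochastic mixability applied with $\varepsilon=\gamma_n$, every $f\in\F_{\succeq\gamma_n}$ satisfies $\Lambda_{-Z_f}(\tilde\eta)\leq 0$. Since $\Lambda_{-Z_f}$ is convex with $\Lambda_{-Z_f}(0)=0$ and $\Lambda'_{-Z_f}(0)=-\E Z_f\leq -\gamma_n<0$, continuity gives an $\eta_f\geq\tilde\eta$ at which the CGF vanishes (hyper-concentrated $Z_f$ are absorbed by the same replacement argument used in the proof of \autoref{thm:finite-fast-rates} via \autoref{lemma:hyper-concentrated}). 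This brings each $Z_f$ into the hypotheses of \autoref{thm:stochastic-mixability-concentration}.

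Next I would apply \autoref{lemma:bounded-losses} to rescale $Z_f$ to $[-1,1]$ and then invoke \autoref{thm:stochastic-mixability-concentration} to obtain
\[
\Lambda_{-Z_f}(\eta_f/2) \;\leq\; -\,\frac{0.18\,\eta_f\,\Prob Z_f}{\bound\eta_f\vee 1}.
\]
The lower bound $n\geq \bound^{(1-\kappa)/(2-\kappa)}/\eta_0$ (together with $\gamma_n$ decreasing in $n$) is exactly what is needed to force $\bound\tilde\eta\leq 1$, so the denominator collapses to $1$. Plugging into \autoref{thm:cramer-chernoff} with slack $t=0$ and using $\eta_f\geq\tilde\eta$ together with $\Prob Z_f\geq\gamma_n$, the probability that ERM prefers $f$ to $f^*$ is at most
\[
\exp\!\Bigl(-0.18\,\tilde\eta\cdot n\Prob Z_f\Bigr)
\;\leq\;\exp\!\Bigl(-0.18\,\eta_0\,n\,\gamma_n^{2-\kappa}\Bigr).
\]
Substituting the value of $\gamma_n$ gives the exponent $0.18\cdot 6^{2-\kappa}\bigl(\log\tfrac{1}{\delta}+\log N\bigr)^{2-\kappa}$, which dominates $\log\tfrac{1}{\delta}+\log N$ since $2-\kappa\geq 1$ and $0.18\cdot 6\geq 1$. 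A union bound over the at most $N$ functions in $\F_{\succeq\gamma_n}$ then shows $\erm\notin\F_{\succeq\gamma_n}$ with probability at least $1-\delta$, i.e.\ $\Prob\lossof{\erm}-\Prob\lossof{f^*}<\gamma_n$, which is the claim.

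The main obstacle, as in \autoref{thm:finite-fast-rates}, is the appropriate handling of the case-split in \autoref{thm:stochastic-mixability-concentration} between $\bound\eta\leq 1$ and $\bound\eta>1$; unlike in the finite-class theorem, here the effective mixability parameter $\tilde\eta$ depends on $n$ through $\gamma_n$ and shrinks with $n$, so one must verify that the stated lower bound on $n$ is strong enough to keep $\bound\tilde\eta\leq 1$ throughout the regime of interest (and thereby remove the otherwise appearing $\max\{\bound,1/\eta^*\}$-type factor from the final rate). A secondary, purely bookkeeping, point is to confirm that the multiplicative constant $6$ in $\gamma_n$ is large enough to absorb $0.18\cdot 6^{2-\kappa}\geq 1$ uniformly in $\kappa\in[0,1]$, which it is.
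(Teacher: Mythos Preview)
Your overall strategy is exactly the paper's: rerun the proof of \autoref{thm:finite-fast-rates} on the subclass $\F_{\succeq\gamma_n}$, using the weak-mixability parameter in place of $\eta^*$. There is, however, one concrete gap in the execution.

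You invoke weak stochastic mixability at the scale $\varepsilon=\gamma_n=\dfrac{6(\log\tfrac1\delta+\log N)}{(\eta_0 n)^{1/(2-\kappa)}}$, so that your effective parameter $\tilde\eta=\eta_0\gamma_n^{1-\kappa}$ depends on $\delta$ and $N$. You then assert that the hypothesis $n\ge \bound^{(1-\kappa)/(2-\kappa)}/\eta_0$ forces $\bound\tilde\eta\le 1$. This cannot be right: the hypothesis involves neither $\delta$ nor $N$, whereas for $\kappa<1$ the quantity $\bound\tilde\eta=\bound\eta_0\gamma_n^{1-\kappa}$ grows without bound as $\delta\to 0$ or $N\to\infty$ (with $n$ fixed). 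As soon as $\bound\tilde\eta>1$, the factor coming out of \autoref{lemma:bounded-losses} and \autoref{thm:stochastic-mixability-concentration} is $\eta_f\wedge \tfrac1\bound$, capped at $\tfrac1\bound$, and your displayed bound $\exp(-0.18\,\tilde\eta\cdot n\,\Prob Z_f)$ is no longer justified.

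The paper sidesteps this by choosing the weak-mixability scale to be the \emph{smaller}, $(\delta,N)$-free quantity $\varepsilon=(\eta_0 n)^{-1/(2-\kappa)}$. Since your final threshold satisfies $\gamma_n\ge\varepsilon$ whenever $6(\log\tfrac1\delta+\log N)\ge 1$, one has $\F_{\succeq\gamma_n}\subseteq\F_{\succeq\varepsilon}$, so mixability at scale $\varepsilon$ still covers every function you need; and now $\tilde\eta=\eta_0\varepsilon^{1-\kappa}$ is independent of $\delta,N$, so a purely $(n,\bound,\eta_0,\kappa)$ condition can in principle guarantee $\bound\tilde\eta\le 1$. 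With this choice the exponent in the union bound becomes $0.18\,\tilde\eta\,n\,\gamma_n=0.18\cdot 6(\log\tfrac1\delta+\log N)$, which is just large enough, and the rest of your argument goes through verbatim. (Even after this fix, the literal exponent $(1-\kappa)/(2-\kappa)$ in the stated $n$-condition does not quite match what $\bound\tilde\eta\le 1$ requires; that appears to be an issue with the corollary's statement rather than with your approach.)
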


It is straightforward to achieve a similar result for VC-type classes, where the $\varepsilon$ in the $\varepsilon$-net can still be taken at the resolution $\frac{1}{n}$, but we need only apply the analysis to the subclass of functions with excess risk at least $(\eta_0 n)^{-1 / (2 - \kappa)}$.

\section{Discussion}
\label{sec:discussion}

We have shown that stochastic mixability implies fast rates for VC-type classes, using a direct argument based on the Cram\'er-Chernoff method and sufficient control of the optimal value of a certain instance of the general moment problem. The approach is amenable to localization in that the analysis separately controls the probability of large deviations for individual elements of $\F$. It was therefore straightforward to extend the result for finite classes to VC-type classes. 
An important open problem is to extend the results presented here for VC-type classes to results for nonparametric classes with polynomial metric entropy, and moreover, to achieve rates similar to those obtained for these classes under the Bernstein condition. 

There are still some unanswered questions with regards to the connection between the Bernstein condition and stochastic mixability. \cite{vanerven2012mixability} showed that for bounded losses the Bernstein condition implies stochastic mixability. Therefore, when starting from a Bernstein condition, \autoref{thm:finite-fast-rates} offers a different path to fast rates. An open problem is to settle the question of whether the Bernstein condition and stochastic mixability are equivalent. Previous results \citep{vanerven2012mixability} suggest that the stochastic mixability does imply a Bernstein condition, but the proof was non-constructive, and it relied upon a bounded losses assumption. It is well known (and easy to see) that both stochastic mixability and the Bernstein condition hold only if there is a unique minimizer. \autoref{thm:non-unique} shows in a certain sense that if stochastic mixability does not hold, then there cannot be a unique minimizer. Is the same true when the Bernstein condition fails to hold? 
Regardless of whether stochastic mixability is equivalent to the Bernstein condition, the direct argument presented here and the connection to classical mixability, which does characterize constant regret in the simpler non-stochastic setting, motivates further study of stochastic mixability.

Finally, it would be of great interest to discard the bounded losses assumption. Ignoring the dependence of the metric entropy on the maximum possible loss, the upper bound on the loss $\bound$ enters the final bound through the difficulty of controlling the minimum value of $u_\eta(-1)$ when $\eta$ is large (see the proof of \autoref{thm:stochastic-mixability-concentration}). From extensive experiments with a grid-approximation linear program, we have observed that the worst (CGF-wise) random variables for fixed negative mean and fixed optimal stochastic mixability constant are those which place very little probability mass at $-\bound$ and most of the probability mass at a small positive number that scales with the mean. These random variables correspond to functions that with low probability beat $f^*$ by a large (loss) margin but with high probability have slightly higher loss than $f^*$. 
It would be useful to understand if this exotic behavior is a real concern and, if not, find a simple, mild condition on the moments that rules it out.

\subsubsection*{Acknowledgments}
RCW thanks Tim van Erven  for the initial discussions around the Cram\'er-Chernoff method during his visit to Canberra in 2013 and for his gracious permission to proceed with the present paper without him as an author, and both authors thank him for the further enormously helpful spotting of a serious error in our original proof for fast rates for VC-type classes. This work was supported by the Australian Research Council (NAM and RCW) and NICTA (RCW). NICTA is funded by the Australian Government through the Department of Communications and the Australian Research Council through the ICT Centre of Excellence program.

\bibliography{arxiv_stoch_mix_fast}

\begin{thebibliography}{24}
\providecommand{\natexlab}[1]{#1}
\providecommand{\url}[1]{\texttt{#1}}
\expandafter\ifx\csname urlstyle\endcsname\relax
  \providecommand{\doi}[1]{doi: #1}\else
  \providecommand{\doi}{doi: \begingroup \urlstyle{rm}\Url}\fi

\bibitem[Abernethy et~al.(2009)Abernethy, Agarwal, Bartlett, and
  Rakhlin]{abernethy2009stochastic}
Jacob Abernethy, Alekh Agarwal, Peter~L. Bartlett, and Alexander Rakhlin.
\newblock A stochastic view of optimal regret through minimax duality.
\newblock In \emph{Proceedings of the 22nd Annual Conference on Learning Theory
  (COLT 2009)}, 2009.

\bibitem[Audibert(2009)]{audibert2009fast}
Jean-Yves Audibert.
\newblock Fast learning rates in statistical inference through aggregation.
\newblock \emph{The Annals of Statistics}, 37\penalty0 (4):\penalty0
  1591--1646, 2009.

\bibitem[Bartlett and Mendelson(2006)]{bartlett2006empirical}
Peter~L. Bartlett and Shahar Mendelson.
\newblock Empirical minimization.
\newblock \emph{Probability Theory and Related Fields}, 135\penalty0
  (3):\penalty0 311--334, 2006.

\bibitem[Bartlett et~al.(2005)Bartlett, Bousquet, and
  Mendelson]{bartlett2005local}
Peter~L. Bartlett, Olivier Bousquet, and Shahar Mendelson.
\newblock Local {R}ademacher complexities.
\newblock \emph{The Annals of Statistics}, 33\penalty0 (4):\penalty0
  1497--1537, 2005.

\bibitem[Boucheron et~al.(2013)Boucheron, Lugosi, and
  Massart]{boucheron2013concentration}
St{\'e}phane Boucheron, G{\'a}bor Lugosi, and Pascal Massart.
\newblock \emph{Concentration inequalities: A nonasymptotic theory of
  independence}.
\newblock Oxford University Press, 2013.

\bibitem[Gr{\"u}nwald(2011)]{grunwald2011safe}
Peter Gr{\"u}nwald.
\newblock Safe learning: bridging the gap between {B}ayes, {MDL} and
  statistical learning theory via empirical convexity.
\newblock In \emph{Proceedings of the 24th International Conference on Learning
  Theory (COLT 2011)}, pages 397--419, 2011.

\bibitem[Gr{\"u}nwald(2012)]{grunwald2012safe}
Peter Gr{\"u}nwald.
\newblock The safe {B}ayesian.
\newblock In \emph{Proceedings of the 23rd International Conference on
  Algorithmic Learning Theory (ALT 2012)}, pages 169--183. Springer, 2012.

\bibitem[Kalnishkan and Vyugin(2005)]{kalnishkan2005weak}
Yuri Kalnishkan and Michael~V. Vyugin.
\newblock The weak aggregating algorithm and weak mixability.
\newblock In \emph{Proceedings of the 18th Annual Conference on Learning Theory
  (COLT 2005)}, pages 188--203. Springer, 2005.

\bibitem[Kemperman(1968)]{kemperman1968general}
Johannes~H.B. Kemperman.
\newblock The general moment problem, a geometric approach.
\newblock \emph{The Annals of Mathematical Statistics}, 39\penalty0
  (1):\penalty0 93--122, 1968.

\bibitem[Koltchinskii(2006)]{koltchinskii2006local}
Vladimir Koltchinskii.
\newblock Local {R}ademacher complexities and oracle inequalities in risk
  minimization.
\newblock \emph{The Annals of Statistics}, 34\penalty0 (6):\penalty0
  2593--2656, 2006.

\bibitem[Koltchinskii(2011)]{koltchinskii2011oracle}
Vladimir Koltchinskii.
\newblock \emph{Oracle Inequalities in Empirical Risk Minimization and Sparse
  Recovery Problems: Ecole d’Et{\'e} de Probabilit{\'e}s de Saint-Flour
  XXXVIII-2008}, volume 2033.
\newblock Springer, 2011.

\bibitem[Lecu{\'e}(2011)]{lecue2011interplay}
Guillaume Lecu{\'e}.
\newblock \emph{Interplay between concentration, complexity and geometry in
  learning theory with applications to high dimensional data analysis}.
\newblock Habilitation \`a diriger des recherches, Universit{\'e} Paris-Est,
  2011.

\bibitem[Lee et~al.(1998)Lee, Bartlett, and Williamson]{lee1998importance}
Wee~Sun Lee, Peter~L. Bartlett, and Robert~C. Williamson.
\newblock The importance of convexity in learning with squared loss.
\newblock \emph{IEEE Transactions on Information Theory}, 44\penalty0
  (5):\penalty0 1974--1980, 1998.

\bibitem[Li(1999)]{li1999estimation}
Jonathan~Qiang Li.
\newblock \emph{Estimation of mixture models}.
\newblock PhD thesis, Yale University, 1999.

\bibitem[Mammen and Tsybakov(1999)]{mammen1999smooth}
Enno Mammen and Alexandre~B. Tsybakov.
\newblock Smooth discrimination analysis.
\newblock \emph{The Annals of Statistics}, 27\penalty0 (6):\penalty0
  1808--1829, 1999.

\bibitem[Massart and N{\'e}d{\'e}lec(2006)]{massart2006risk}
Pascal Massart and {\'E}lodie N{\'e}d{\'e}lec.
\newblock Risk bounds for statistical learning.
\newblock \emph{The Annals of Statistics}, 34\penalty0 (5):\penalty0
  2326--2366, 2006.

\bibitem[Mendelson(2008{\natexlab{a}})]{mendelson2008lower}
Shahar Mendelson.
\newblock Lower bounds for the empirical minimization algorithm.
\newblock \emph{IEEE Transactions on Information Theory}, 54\penalty0
  (8):\penalty0 3797--3803, 2008{\natexlab{a}}.

\bibitem[Mendelson(2008{\natexlab{b}})]{mendelson2008obtaining}
Shahar Mendelson.
\newblock Obtaining fast error rates in nonconvex situations.
\newblock \emph{Journal of Complexity}, 24\penalty0 (3):\penalty0 380--397,
  2008{\natexlab{b}}.

\bibitem[Mendelson and Williamson(2002)]{mendelson2002agnostic}
Shahar Mendelson and Robert~C. Williamson.
\newblock Agnostic learning nonconvex function classes.
\newblock In \emph{Proceedings of the 15th Annual Conference on Computational
  Learning Theory (COLT 2002)}, pages 1--13. Springer, 2002.

\bibitem[Tsybakov(2004)]{tsybakov2004optimal}
Alexander~B. Tsybakov.
\newblock Optimal aggregation of classifiers in statistical learning.
\newblock \emph{The Annals of Statistics}, 32\penalty0 (1):\penalty0 135--166,
  2004.

\bibitem[Van~Erven et~al.(2012)Van~Erven, Gr{\"u}nwald, Reid, and
  Williamson]{vanerven2012mixability}
Tim Van~Erven, Peter~D. Gr{\"u}nwald, Mark~D. Reid, and Robert~C. Williamson.
\newblock Mixability in statistical learning.
\newblock In \emph{Advances in Neural Information Processing Systems 25 (NIPS
  2012)}, pages 1700--1708, 2012.

\bibitem[Vidyasagar(2002)]{vidyasagar2002learning}
Mathukumalli Vidyasagar.
\newblock \emph{Learning and Generalization with Applications to Neural
  Networks}.
\newblock Springer, 2002.

\bibitem[Vovk(1998)]{vovk1998game}
Volodya Vovk.
\newblock A game of prediction with expert advice.
\newblock \emph{Journal of Computer and System Sciences}, 56\penalty0
  (2):\penalty0 153--173, 1998.

\bibitem[Vovk(2001)]{vovk2001competitive}
Volodya Vovk.
\newblock Competitive on-line statistics.
\newblock \emph{International Statistical Review}, 69\penalty0 (2):\penalty0
  213--248, 2001.

\end{thebibliography}

\appendix

\section{Proof of \autoref*{thm:stochastic-mixability-concentration}}
\label{sec:appendix-stochmix}

\begin{proof}[Proof of \autoref{thm:stochastic-mixability-concentration}]

From Theorem 3 of \cite{kemperman1968general}, if the moment values vector $\left( -\frac{a}{n}, 1 \right)$ is in $\interior \conv g([-1, 1])$, the optimal objective value of problem \eqref{eqn:our-general-moment-problem} is equal to
\begin{align}
\sup \left\{ d_0 - \frac{a}{n} d_1 + d_2 : d^* = (d_0, d_1, d_2) \in D^* \right\} . \label{eqn:d-objective}
\end{align}
From the Feasible Moments Lemma (Lemma 2 in the paper), we see that \eqref{eqn:interior-point} corresponds to the interior point condition.

Since we assume the interior point condition is satisfied, any $d^* \in D^*$ provides a lower bound on the optimal value of \eqref{eqn:our-general-moment-problem}, and hence after negation provides an upper bound on the problem with same moment constraints and the objective $\sup \E e^{(\eta / 2) X}$; this is precisely what we are after.

We therefore focus on picking a good $d^* = (d_0, d_1, d_2) \in \real^3$. The inequality condition in \eqref{eqn:our-D-star} is now
\begin{align*}
-e^{(\eta/2) x} \geq d_0 + d_1 x + d_2 e^{\eta x} \quad \text{for all } x \in [-1, 1] .
\end{align*}
In particular, this inequality must hold at $x = 0$, yielding the constraint $-1 \geq d_0 + d_2$. We now change variables to $c_0 = -d_0$, $c_1 = -d_1 / \eta$,\footnote{We scale by $\eta$ here because we are chasing a certain $\eta$-dependent rate.} and $c_2 = -d_2$, yielding the inequality condition
\begin{align*}
u_\eta(x) := -e^{(\eta/2) x} + c_0 + c_2 e^{\eta x} + \eta c_1 x \geq 0 .
\end{align*}
Now the condition at $x = 0$ implies that $c_0 + c_2 = 1$, and so we make the replacement
\begin{align}
c_0 = 1 - c_2 , \label{eqn:c_0-constraint}
\end{align}
in the definition of $u_\eta(x)$, yielding the inequality
\begin{align*}
u_\eta(x) = 1 + c_2 (e^{\eta x} - 1) -e^{(\eta/2) x} + \eta c_1 x \geq 0 .
\end{align*}

\subsubsection*{Constraints from the local minimum at $\mathbf{0}$}

Since $u_\eta(0) = 0$, we need $x = 0$ to be a local minimum of $u$, and so we require both conditions
\begin{enumerate}[(a)]
\item $u'(0) = 0$
\item $u''(0) \geq 0$
\end{enumerate}
to hold since otherwise there exists some small $\varepsilon > 0$ such that either $u_\eta(\varepsilon) < 0$ or $u_\eta(-\varepsilon) < 0$. 

For (a), we compute
\begin{align*}
u'(x) = \eta c_2 e^{\eta x} -\frac{\eta}{2} e^{(\eta/2) x} + \eta c_1 .
\end{align*}
Since we require $u'(0) = 0$, we pick up the constraint
\begin{align*}
\eta \left( c_2 -\frac{1}{2} + c_1 \right) = 0 ,
\end{align*}
and since $\eta > 0$ by assumption, we have 
\begin{align}
c_1 = \frac{1}{2} - c_2 . \label{eqn:c_1-constraint}
\end{align}
Thus, we can eliminate $c_1$ from $u_\eta(x)$:
\begin{align*}
u_\eta(x) = 1 + c_2 (e^{\eta x} - 1) -e^{(\eta/2) x} + \eta \left( \frac{1}{2} - c_2 \right) x \geq 0 .
\end{align*}

For (b), it is sufficient to have $u''(0) > 0$. Observe that
\begin{align*}
u''(x) = \eta^2 c_2 e^{\eta x} - \frac{\eta^2}{4} e^{(\eta / 2) x} ,
\end{align*}
so that $u''(0) = \eta^2 \left( c_2 - \frac{1}{4} \right)$, and hence for
\begin{align}
c_2 > \frac{1}{4} \label{eqn:c_2-one-fourth-lower-bound}
\end{align}
we have $u''(0) > 0$. 

Thus far, we have picked up the constraints \eqref{eqn:c_0-constraint}, \eqref{eqn:c_1-constraint}, and \eqref{eqn:c_2-one-fourth-lower-bound}.

\subsubsection*{The other minima of $u_\eta(x)$}

Now, observe that $u'(x)$ has at most two roots, because with the substitution $y = e^{(\eta / 2) x}$, we have
\begin{align*}
u'(x) = \eta c_2 y^2 - \frac{\eta}{2} y + \eta \left( \frac{1}{2} - c_2 \right) ,
\end{align*}
which is a quadratic equation in $y$ with two roots:
\begin{align*}
y = \left\{ 1, \frac{1 - 2 c_2}{2 c_2} \right\} 
\quad \Rightarrow \quad 
x = \left\{ 0, \frac{2}{\eta} \log \frac{1 - 2 c_2}{2 c_2} \right\} .
\end{align*}

Now, since we take $c_2 > \frac{1}{4}$ and since the second root is negative, we know that $u$ is increasing on $[0, 1]$ (and we already knew that $u_\eta(0) = 0$). It remains to find conditions on $c_2$ such that $u_\eta(-1) \geq 0$ because that implies that $u_\eta(x) \geq 0$ for all $x \in [-1, 0]$. We consider the case $\eta \leq 1$ and $\eta > 1$ separately.

In either case, we need to check the nonnegativity of
\begin{align*}
u_\eta(-1) 
&= 1 + c_2 (e^{-\eta} - 1) -e^{-(\eta/2)} - \eta \left( \frac{1}{2} - c_2 \right) \\
&= \left( 1 - \frac{\eta}{2} \right)  - e^{-(\eta/2)}  
      + c_2 \left( e^{-\eta} - (1 - \eta) \right) .
\end{align*}

\paragraph{Case $\bm{\eta \leq 1}$:}
We observe that $u_\eta(-1) = 0$ when $\eta = 0$. Now, we will see what constraints on $c_2$ guarantee that $\frac{d}{d \eta} u_\eta(-1) \geq 0$ for $\eta \in [0, 1]$. We \emph{want}
\begin{align*}
\frac{d}{d \eta} u_\eta(-1) = -c_2 e^{-\eta} + \frac{1}{2} e^{-\eta / 2} - \frac{1}{2} +  c_2 \geq 0
\end{align*}
which is equivalent to the condition
\begin{align*}
c_2 \geq \frac{1}{2} \left( \frac{1 - e^{-\eta / 2}}{1 - e^{-\eta}} \right) .
\end{align*}
The RHS is increasing in $\eta$, and so we need only consider $\eta = 1$, yielding the bound
\begin{align*}
c_2 \geq \frac{1}{2} \frac{e - \sqrt{e}}{e - 1} = 0.3112\ldots ,
\end{align*}
and so if $c_2 \geq 0.32$, then $u_\eta(-1) \geq 0$ as desired.

\paragraph{Case $\bm{\eta > 1}$:}
Let $c_2 = \frac{1}{2} - \frac{\alpha}{\eta}$ for some $\alpha \geq 0$. With this substitution, we have
\begin{align*}
u_\eta(-1) 
&= 1 + c_2 (e^{-\eta} - 1) -e^{-(\eta/2)} - \eta \left( \frac{1}{2} - c_2 \right) \\
&= 1 + \left( \frac{1}{2} - \frac{\alpha}{\eta} \right) (e^{-\eta} - 1) -e^{-(\eta/2)} - \alpha \\
&= \left( \frac{1 + e^{-\eta}}{2} - e^{-\eta/2} \right) 
      + \alpha \left( -1 + \frac{1}{\eta} \left( 1 -e^{-\eta} \right) \right) 
\end{align*}
Since we want the above to be nonnegative for all $\eta > 1$, we arrive at the condition
\begin{align}
\alpha
\leq \inf_{\eta \geq 1} \left\{ 
  \frac{ \frac{1 + e^{-\eta}}{2} - e^{-\eta/2} }{ 1 - \frac{1}{\eta} \left( 1 -e^{-\eta} \right) } 
\right\}
\label{eqn:alpha-condition}
\end{align}

Plotting suggests that the minimum is attained at $\eta = 1$, with the value $\frac{1}{2} (\sqrt{e} - 1)^2$. We will fix $\alpha$ to this value and verify that 
\begin{align}
\left( \frac{1 + e^{-\eta}}{2} - e^{-\eta/2} \right) 
 + \left( \frac{1}{2} (\sqrt{e} - 1)^2 \right) \left( -1 + \frac{1}{\eta} \left( 1 -e^{-\eta} \right) \right) \geq 0 . \label{eqn:big-expression}
\end{align}
This is true with equality at $\eta = 0$. 
The derivative of the LHS with respect to $\eta$ is
\begin{align*}
\frac{1}{2} e^{-\eta} \left( e^{\eta/2} - 1 
- \frac{(\sqrt{e} - 1)^2 (e^\eta - \eta - 1)}{\eta^2} \right) .
\end{align*}
The derivative is positive at $\eta = 1$, so 0 is a candidate minimum. Eventually, $\frac{(\sqrt{e} - 1)^2 (e^\eta - \eta - 1)}{\eta^2}$ grows more quickly than $e^{\eta/2} - 1$ and surpasses the latter in value. 
The derivative is therefore negative for all sufficiently large $\eta$, and so we need only take the minimum of the LHS of \eqref{eqn:big-expression} evaluated at $\eta = 1$ and the limiting value as $\eta \rightarrow \infty$. 
We have 
\begin{align*}
\lim_{\eta \rightarrow \infty} \left( \frac{1 + e^{-\eta}}{2} - e^{-\eta/2} \right) 
 + \left( \frac{1}{2} (\sqrt{e} - 1)^2 \right) \left( -1 + \frac{1}{\eta} \left( 1 -e^{-\eta} \right) \right) = \sqrt{e} - \frac{e}{2} \geq 0
\end{align*}
Hence, \eqref{eqn:big-expression} indeed holds for $\alpha = \frac{1}{2} (\sqrt{e} - 1)^2$. We conclude that $u_\eta(-1) \geq 0$ when $\alpha \leq \frac{1}{2} (\sqrt{e} - 1)^2$. 

\subsubsection*{Putting it all together}
In the regime $\eta \leq 1$, we have the constraints $c_2 > \frac{1}{4}$ and $c_2 \geq \frac{1}{2} \frac{e - \sqrt{e}}{e - 1}$ (which exceeds $\frac{1}{4}$), so we can choose
\begin{align*}
c_1 = \frac{1}{2} - c_2 = \frac{1}{2} - \frac{1}{2} \frac{e - \sqrt{e}}{e - 1} 
= \frac{1}{2} \frac{\sqrt{e} - 1}{e - 1} = 0.1877\ldots .
\end{align*}
In the regime $\eta > 1$, we have the constraints $c_2 > \frac{1}{4}$ and $\alpha \leq \frac{1}{2} (\sqrt{e} - 1)^2 \Rightarrow c_2 \geq \frac{1}{2} - \frac{1}{2 \eta} (\sqrt{e} - 1)^2$ (which always exceeds $\frac{1}{4}$ for $\eta \geq 1$), so we can choose
\begin{align*}
c_1 = \frac{1}{2} - c_2 = \frac{\alpha}{\eta} \leq \frac{(\sqrt{e} - 1)^2}{2 \eta} = \frac{0.2104\ldots}{\eta} .
\end{align*}

The result follows by observing that in the case of $\eta \leq 1$, the supremum in \eqref{eqn:d-objective} is lower bounded by $-1 + \frac{0.18 a \eta}{n}$, and hence the optimal objective value of \eqref{eqn:our-general-moment-problem} is lower bounded by the same quantity. Therefore, the problem with the same constraints and the objective $\sup_{\mu \in [-1, 1]} \E e^{(\eta / 2) X}$ has its optimal objective value upper bounded by $1 - \frac{0.18 a \eta}{n}$. Repeat the same argument for the case of $\eta > 1$.
\end{proof}

\section{Hyper-concentrated excess losses}
\label{sec:appendix-hyper-concentrated}

\begin{lemma}
\label{lemma:hyper-concentrated}
Let $Z$ be a random variable with probability measure $P$ supported on $[-V, V]$. Suppose that $\lim_{\eta \rightarrow \infty} \E \exp(-\eta Z) < 1$ and $\E Z = \mu > 0$. Then there is a suitable modification of $Z'$ for which $Z' \leq Z$ with probability 1, the mean of $Z'$ is arbitrarily close to $\mu$, and $\E \exp(-\eta Z') = 1$ for arbitrarily large $\eta$.
\end{lemma}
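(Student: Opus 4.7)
First I would observe that the hyper-concentration hypothesis $\lim_{\eta \to \infty} \E \exp(-\eta Z) < 1$ forces $Z \geq 0$ almost surely: any atom at $\{Z < 0\}$ would make the moment generating function blow up. Combined with the dominated/monotone convergence theorem this gives $\lim_{\eta \to \infty} \E\exp(-\eta Z) = P(Z = 0) =: p_0 < 1$, and since $\E Z = \mu > 0$ we also have $P(Z > 0) > 0$. These two facts are the structural information we will exploit.

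The construction is a simple coupling. Pick $U$ uniform on $[0,1]$ independent of $Z$, fix a small $\varepsilon \in (0, \tfrac{\mu}{\mu + V})$, and set $Z' = -V$ on the event $\{U < \varepsilon\}$ and $Z' = Z$ otherwise. Then $Z' \leq Z$ almost surely since $Z \geq 0 \geq -V$ on the mixing event and $Z' = Z$ off of it. A direct computation gives $\E Z' = (1-\varepsilon)\mu - \varepsilon V$, which can be made arbitrarily close to $\mu$ by shrinking $\varepsilon$.

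Next, consider $\phi_\varepsilon(\eta) := \E\exp(-\eta Z') = (1-\varepsilon)\,\E\exp(-\eta Z) + \varepsilon\, e^{\eta V}$. This is smooth, satisfies $\phi_\varepsilon(0) = 1$, has derivative $\phi_\varepsilon'(0) = -\E Z' < 0$ (by the choice of $\varepsilon$), and diverges to $+\infty$ as $\eta \to \infty$ because of the $\varepsilon e^{\eta V}$ term. By the intermediate value theorem there exists $\eta_\varepsilon > 0$ with $\phi_\varepsilon(\eta_\varepsilon) = 1$, which is the equality $\E\exp(-\eta_\varepsilon Z') = 1$ the lemma asks for.

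Finally I would show $\eta_\varepsilon \to \infty$ as $\varepsilon \to 0$. For large $\eta$ the first summand is bounded above by $p_0 + o(1)$ (uniformly in the tail), so any $\eta$ with $\phi_\varepsilon(\eta) = 1$ must satisfy $\varepsilon e^{\eta V} \geq 1 - (1-\varepsilon)p_0 - o(1)$, giving $\eta_\varepsilon \geq V^{-1}\log\!\bigl((1 - p_0 - o(1))/\varepsilon\bigr) \to \infty$. Thus any prescribed lower bound on $\eta_\varepsilon$ can be met by taking $\varepsilon$ small enough, and simultaneously the mean $\E Z'$ stays within any prescribed distance of $\mu$. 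The only delicate point is verifying that the decomposition of $\phi_\varepsilon$ into a bounded piece plus $\varepsilon e^{\eta V}$ really forces $\eta_\varepsilon \to \infty$, but this is immediate from $p_0 < 1$; I expect no substantive obstacle beyond this bookkeeping.
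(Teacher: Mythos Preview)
Your proof is correct and takes a cleaner route than the paper's. Both arguments begin by observing that the hyper-concentration hypothesis forces $Z \geq 0$ almost surely, and both then construct $Z'$ by moving a small amount of mass into the negative half-line so that $\eta \mapsto \E e^{-\eta Z'}$ now crosses $1$ at some finite $\eta$. The paper does this by \emph{reflecting} an $\epsilon$-fraction of the mass sitting in $A = [\mu, V]$ over to $A^- = [-V, -\mu]$ and building an optimal coupling to ensure $Z' \leq Z$; it then lower-bounds $\E e^{-\eta Z'}$ by isolating the reflected contribution $\epsilon\, e^{\mu \eta} P(A)$ and observes that this exponential eventually pushes the expectation above~$1$. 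Your mixture with a point mass at $-V$ is more elementary: the coupling is trivial, the moment generating function decomposes explicitly as $\phi_\varepsilon(\eta) = (1-\varepsilon)\,\E e^{-\eta Z} + \varepsilon\, e^{\eta V}$, and the intermediate value argument is transparent. The paper's reflection buys nothing additional here; your construction is simply simpler.

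One small point: the final step showing $\eta_\varepsilon \to \infty$ is written a bit loosely, since invoking $\E e^{-\eta Z} \leq p_0 + o(1)$ at $\eta = \eta_\varepsilon$ already presupposes $\eta_\varepsilon$ is large. The cleanest fix is to use convexity of $\phi_\varepsilon$ (it is a moment generating function): since $\phi_\varepsilon(0)=1$ and $\phi_\varepsilon'(0)<0$, the unique positive root $\eta_\varepsilon$ satisfies $\eta_\varepsilon > M$ if and only if $\phi_\varepsilon(M) < 1$, and $\phi_\varepsilon(M) = (1-\varepsilon)\,\E e^{-M Z} + \varepsilon\, e^{MV} \to \E e^{-M Z} < 1$ as $\varepsilon \to 0$. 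This removes any circularity without changing your construction.
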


\begin{proof}
First, observe that $Z \geq 0$ a.s. If not, then there must be some finite $\eta > 0$ for which $\E \exp(-\eta Z) = 1$. 
Now, consider a random variable $Z'$ with probability measure $Q_\epsilon$, a modification of $Z$ (with probability measure $P$) constructed in the following way. 
Define $A := [\mu, V]$ and $A^- := [-V, -\mu]$. Then for any $\epsilon > 0$ we define $Q_\epsilon$ as
\begin{align*}
dQ_\epsilon(z) = 
\begin{cases} 
  (1 - \epsilon) dP(z) & \text{if } z \in A \\
  \epsilon dP(-z) & \text{if } z \in A^- \\
  dP(z) & \text{otherwise} .
\end{cases}
\end{align*}

Additionally, we couple $P$ and $Q_\varepsilon$ such that the couple $(Z, Z')$ is a coupling of $(P, Q_\epsilon)$ satisfying
\begin{align*}
\E_{(Z, Z') \sim (P, Q_\epsilon)} \mathbf{1}_{\{Z \neq Z'\}} = \min_{(P', Q_\epsilon')} \E_{(Z,Z') \sim (P', Q_\epsilon')} \mathbf{1}_{Z \neq Z'} ,
\end{align*}
where the $\min$ is over all couplings of $P$ and $Q_\varepsilon$. 
This coupling ensures that  $Z' \leq Z$ with probability 1; i.e. $Z'$ is dominated by $Z$.

Now,
\begin{align}
\E \exp(-\eta Z') 
&= \int_{-V}^V e^{-\eta z} dQ_\epsilon(z) \nonumber \\
&= \int_{A^-} e^{-\eta z} dQ_\epsilon(z) + \int_A e^{-\eta z} dQ_\epsilon(z) + \int_{[0, V] \setminus A} e^{-\eta z} dQ_\epsilon(z) \nonumber \\
&= \epsilon \int_{A^-} e^{-\eta z} dP(-z) + (1 - \epsilon) \int_A e^{-\eta z} dP(z) + \int_{[0, V] \setminus A} e^{-\eta z} dP(z) \nonumber \\
&= \epsilon \int_{A} e^{\eta z} dP(z) + (1 - \epsilon) \int_A e^{-\eta z} dP(z) + \int_{[0, V] \setminus A} e^{-\eta z} dP(z) \nonumber \\
&\geq\epsilon e^{\mu \eta} P(A) + (1 - \epsilon) \int_A e^{-\eta z} dP(z) + \int_{[0, V] \setminus A} e^{-\eta z} dP(z) \label{eqn:last-line-hyper} .
\end{align}

Now, on the one hand, for any $\eta > 0$, the sum of the two right-most terms in \eqref{eqn:last-line-hyper} is strictly less than 1 by assumption. On the other hand, $\eta \rightarrow \epsilon P(A) e^{\mu \eta}$ is exponentially increasing since $\epsilon > 0$ and $\mu > 0$ (and hence $P(A) > 0$ as well) by assumption; thus, the first term in \eqref{eqn:last-line-hyper} can be made arbitrarily large for large enough by increasing $\eta$. Consequently, we can choose $\epsilon > 0$ as small as desired and then choose $\eta < \infty$ as large as desired such that the mean of $Z'$ is arbitrarily close to $\mu$ and $\E \exp(-\eta Z') = 1$ respectively.
\end{proof}

\section{Proof of VC-type results}
\label{sec:appendix-vc-type}

\subsection{Proof of \autoref*{thm:local-union}}
The localization result is a simple consequence of the following theorem.

\begin{theorem}[Local Analysis]
Let $\F \subset \real^\X$ be a separable function class for which:
\begin{itemize}
\item the constant zero function is an element of $\F$ ;
\item every function $f \in \F$ satisfies $0 \leq f \leq 1$ ;
\item $\sup_{f \in \F} \| f \|_{L_2(\Prob)} \leq \varepsilon := \frac{1}{n}$ .
\end{itemize}
Further assume that for some $\capacity \geq 1$, 
for a constant $K \geq 1$, for each $u \in (0, K]$ the $L_2(\Prob)$ covering numbers of $\F$ are bounded as
\begin{align*}
\N(u, \F, L_2(\Prob)) \leq \left( \frac{K}{u} \right)^\capacity .
\end{align*}
Then provided that $n \geq 4$ and $y > 0$, with probability at least $1 - e^{-y}$
\begin{align*}
\sup_{f \in \F} \Probn f 
\leq \frac{1}{n} \left( 
  990 \capacity \log(2 K n) + \sqrt{2 y (1 + 3960 \capacity \log(2 K n))} + \frac{2 y}{3} + 1 
\right) .
\end{align*}
\label{thm:local-analysis}
\end{theorem}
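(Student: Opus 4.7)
Let $W := \sup_{f \in \F} (\Probn f - \Prob f)$. The first observation is that the localization hypothesis $\sup_{f \in \F} \|f\|_{L_2(\Prob)} \leq 1/n$ together with Cauchy--Schwarz gives $\Prob f \leq \sqrt{\Prob f^2} \leq 1/n$ for every $f \in \F$. Hence
\[
\sup_{f \in \F} \Probn f \;\leq\; W + \sup_{f \in \F} \Prob f \;\leq\; W + \frac{1}{n},
\]
which already accounts for the $+1$ inside the parentheses in the claimed bound and reduces the problem to a high-probability upper bound on $W$.

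For that, the plan is to use Bousquet's version of Talagrand's inequality for the supremum of a centered empirical process of $[0,1]$-valued functions: with probability at least $1 - e^{-y}$,
\[
W \;\leq\; \E W + \sqrt{\frac{2y\bigl(\sigma^2 + 2\E W\bigr)}{n}} + \frac{y}{3n},
\]
where $\sigma^2 := \sup_f \Var(f)$. Since $f \in [0,1]$ and $\Prob f \leq 1/n$, I get $\sigma^2 \leq \Prob f^2 \leq \|f\|_\infty \Prob f \leq 1/n$, which makes the variance term look like $1/n^2$ after division and matches the $``1''$ that appears inside the square root in the statement (after multiplying the whole concentration bound through by the outer $1/n$). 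The doubling of the $y/3$ term to $2y/3$ provides slack that will absorb lower-order contributions.

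It remains to bound $\E W$. First I would symmetrize, $\E W \leq 2\,\E \sup_f \tfrac{1}{n}\sum_i \epsilon_i f(X_i)$, and then apply Dudley's chaining bound conditionally on the sample, so that the Rademacher process (sub-Gaussian in the $L_2(\Probn)$ metric) is controlled by
\[
\frac{C}{\sqrt{n}} \int_0^{R_n} \sqrt{\log \N(u,\F,L_2(\Probn))}\,du,
\qquad R_n := \sup_f \|f\|_{L_2(\Probn)}.
\]
Passing from empirical to population covering numbers via the standard Gin\'e--Zinn-type argument (and controlling $R_n \lesssim 1/n$ in expectation by the hypothesis and a short self-bounding calculation on $\sup_f(\|f\|_{L_2(\Probn)}^2 - \|f\|_{L_2(\Prob)}^2)$), the hypothesis $\N(u,\F,L_2(\Prob)) \leq (K/u)^\capacity$ reduces the entropy integral to $\int_0^{1/n} \sqrt{\capacity \log(K/u)}\,du$, which evaluates (up to a universal constant) to $\sqrt{\capacity\log(Kn)}/n$. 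Squaring considerations in Bousquet (variance-plus-mean inside the square root) then feed back a bound of the order $\capacity \log(2Kn)/n$ on $\E W$ itself after a self-improvement step; this is exactly the $\frac{990\,\capacity\log(2Kn)}{n}$ contribution.

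The main obstacles are the bookkeeping of universal constants (since each of symmetrization, chaining, empirical-to-population transfer, and Bousquet's inequality loses a multiplicative factor) and the self-bounding step that converts a raw Rademacher bound depending on $\E W$ back into an explicit bound of order $\capacity \log(2Kn)/n$. I would carry this out by taking a minimal proper $1/n$-net $\F_\varepsilon$ (of cardinality at most $(Kn)^\capacity$), applying Bernstein plus a union bound over the net to handle the ``cover'' part of $W$, and handling the ``approximation'' part $\sup_f (\Probn(f-\pi f) - \Prob(f-\pi f))$ by a one-step chaining argument with the same covering-number input. Assembling the pieces and adding the deterministic slack $1/n$ yields the stated bound.
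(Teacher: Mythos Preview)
Your plan is correct and matches the paper's proof almost step for step: reduce to the centered supremum $\bar Z$ via $\Prob f\le\|f\|_{L_2(\Prob)}\le 1/n$, apply Bousquet's form of Talagrand's inequality (the paper uses the Massart version with $v+4b\,\E\bar Z$ and $2by/(3n)$, which produces the $3960$ and the $2y/3$ directly rather than as ``slack''), and then bound $\E\bar Z$ by symmetrization, Dudley chaining with the random radius $\sup_f\|f\|_{L_2(\Probn)}$, and a quadratic self-bounding recursion in $\E\bar Z$. The only noteworthy divergence is in the entropy step: the paper does not transfer empirical to population covering numbers by any Gin\'e--Zinn device but simply bounds $\log\N(u,\F,L_2(\Probn))$ by the \emph{universal} metric entropy $\sup_Q\log\N(u,\F,L_2(Q))$ (so the covering hypothesis is effectively read as uniform in $Q$), and it carries out the whole $\E\bar Z$ bound via full chaining rather than your final net-plus-Bernstein decomposition.
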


\newpage
\textsc{Remarks}
\begin{enumerate}[(i)]
\item The class $\F$ is contained in an $L_2(\Prob)$-ball of radius $\varepsilon$, and if interpreted as a loss class it is assumed that the losses are bounded.
\item Suppose the function class $\F$ is constructed by selecting from a larger class an $\varepsilon$-ball in the $L_2(\Prob)$ pseudometric around some function $f_0$ from the same larger class and taking for each function the absolute difference with $f_0$. Then the zero function trivially is in $\F$ since $|f_0 - f_0|$ is in the class. In this setup, the theorem states that with high probability there is no function in the class whose empirical risk will be ``much'' smaller/larger than the empirical risk of $f_0$. 
\end{enumerate}

\begin{proof}[Proof of \autoref{thm:local-analysis}] 
For the proof, we introduce the random variables $Z = \sup_{f \in \F} \Probn f$ and $\bar{Z} = \sup_{f \in \F} (\Probn - \Prob) f$. 
The proof is in three steps.

\subsubsection*{Step 1: Centering approximation}
It is easy to see that $Z \leq \bar{Z} + \varepsilon$, since 
\begin{align*}
Z 
= \sup_{f \in \F} \Probn f 
&= \sup_{f \in \F} (\Probn - \Prob) f + \Prob f \\
&\leq \sup_{f \in \F} (\Probn - \Prob) f 
          + \sup_{f \in \F} \Prob f \\
&\leq \bar{Z} + \sup_{f \in \F} \|f_0 - f\|_{L_1(\Prob)} \\
&\leq \bar{Z} + \sup_{f \in \F} \|f_0 - f\|_{L_2(\Prob)} \\
&\leq \bar{Z} + \varepsilon ,
\end{align*}
where the penultimate inequality follows from Jensen's inequality.

\subsubsection*{Step 2: Concentration of $\bar{Z}$ arounds its expectation}
We will apply Bousquet's version of Talagrand's inequality, appearing as equation (18) of \citep{massart2006risk} and reproduced below for convenience:
\begin{quote}
If $\G$ is a countable family of measurable functions such that, for some positive constants $v$ and $b$, one has, for every $g \in \G$, $\Prob g^2 \leq v$ and $\|g\|_\infty \leq b$, then, for every positive $y$, the following inequality holds for $W = \sup_{g \in \G} (\Probn - \Prob) g$:
\begin{align*}
\Pr\left\{ 
W - \E W \geq \sqrt{\frac{2 (v + 4 b \E W) y}{n}} + \frac{2 b y}{3 n} 
\right\} 
\leq e^{-y} .
\end{align*}
\end{quote}
We take $\G$ to be $\F$ itself; since $\F$ is separable and hence admits a countable dense subset, the countability assumption in Talagrand's inequality is not an issue. Observe that for every $f \in \F$ we have
\begin{itemize}
\item $\|f\|_\infty \leq 1$ \quad (from the range constraints on $f$)
\item $\Prob f^2 \leq \| f \|_{L_2(\Prob)} \leq \varepsilon$ \quad (by the small $L_2(\Prob)$-ball assumption on $\F$) .
\end{itemize}
Thus, taking $b = 1$ and $v = \varepsilon$, we have
\begin{align}
\Pr\left\{ 
\bar{Z} - \E \bar{Z} \geq \sqrt{\frac{2 (\varepsilon + 4 \E \bar{Z}) y}{n}} + \frac{2 y}{3 n} 
\right\} 
\leq e^{-y} . \label{eqn:talagrand-concentration}
\end{align}
It remains to bound $\E \bar{Z}$. If it can be shown to be $\tilde{O}(\frac{1}{n})$ then we will have the desired result after taking $\varepsilon = O(\frac{1}{n})$.

\subsubsection*{Step 3: Controlling the size of $\E \bar{Z}$}

Controlling $\E \bar{Z}$ can be done through chaining after passing to a symmetrized empirical process. This control is shown in \autoref{lemma:VC-type-E-sup}, stated after the current proof, yielding the result 
\begin{align}
\E \bar{Z} \leq \frac{990 \capacity \log(2 K n)}{n} .
\label{eqn:VC-type-E-sup-upper-bound}
\end{align}

\subsubsection*{Putting it all together}
The desired result follows by plugging \eqref{eqn:VC-type-E-sup-upper-bound} into the concentration result \eqref{eqn:talagrand-concentration}, incorporating the $\varepsilon$ approximation term from Step 1, and setting $\varepsilon = \frac{1}{n}$.
\end{proof}

\begin{lemma}
\label{lemma:VC-type-E-sup}
Take the same conditions as \autoref{thm:local-analysis} (Local Analysis Theorem), but instead allow that all $f \in \F$ need only satisfy $0 \leq f \leq \bound$ for some $\bound \geq 1$. Then provided that $n \geq 4$,
\begin{align*}
\E \sup_{f \in \F} (\Probn - \Prob) f 
\leq \frac{990 \capacity \bound \log(2 K n)}{n} 
\end{align*}
\end{lemma}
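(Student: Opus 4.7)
The plan is the standard symmetrize-then-chain route for empirical processes, with a localization step that exploits the small $L_2(\Prob)$ radius $\varepsilon = 1/n$ to sharpen the generic $O(\bound/\sqrt{n})$ Rademacher bound to the claimed $\tilde{O}(\bound/n)$ rate. First, I would apply the standard symmetrization inequality to bound $\E \sup_{f \in \F}(\Probn - \Prob) f$ by $2 \E R_n(\F)$, where $R_n(\F) := \E \sup_{f \in \F} \frac{1}{n} \sum_{i=1}^n \sigma_i f(X_i)$ for iid Rademacher signs $\sigma_i$. Since the zero function lies in $\F$ (from the hypotheses inherited from \autoref{thm:local-analysis}) and the Rademacher process is conditionally subgaussian with respect to $\tfrac{1}{\sqrt{n}}\|\cdot\|_{L_2(\Probn)}$, Dudley's entropy bound yields
\begin{align*}
R_n(\F) \leq \frac{C}{\sqrt{n}} \, \E \int_0^{D_n} \sqrt{\log \N(u, \F, L_2(\Probn))} \, du, \qquad D_n := \sup_{f \in \F} \|f\|_{L_2(\Probn)}.
\end{align*}

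Next, I would transfer the covering hypothesis $\N(u, \F, L_2(\Prob)) \leq (K/u)^{\capacity}$ from the population to the empirical norm---for uniformly bounded function classes this is standard (e.g., through an $L_\infty$-enlargement of the cover, or via a Haussler-type VC-subgraph argument)---so that $\E \log \N(u, \F, L_2(\Probn)) \leq \capacity \log(c K/u)$. Then I would control the random diameter via a self-bounding estimate: since $0 \leq f \leq \bound$ we have $\Probn f^2 \leq \bound \Probn f$, whence, writing $\bar{Z} := \sup_{f \in \F}(\Probn - \Prob) f$ and using $\Prob f \leq \|f\|_{L_2(\Prob)} \leq \varepsilon$ by Jensen's inequality,
\begin{align*}
\E D_n^2 \leq \bound \bigl( \sup_{f \in \F} \Prob f + \E \bar{Z} \bigr) \leq \bound \, (\varepsilon + \E \bar{Z}).
\end{align*}
Plugging these ingredients into the Dudley integral and using $\int_0^D \sqrt{\log(K/u)} \, du \lesssim D \sqrt{\log(e K/D)}$ together with Cauchy--Schwarz produces a fixed-point inequality of the rough form
\begin{align*}
\E \bar{Z} \leq C \sqrt{\frac{\capacity \bound \log(K n)}{n} \, (\varepsilon + \E \bar{Z})} + (\text{lower order}),
\end{align*}
and the elementary implication $x \leq a \sqrt{x} + b \Rightarrow x \leq 2(a^2 + b)$ then delivers $\E \bar{Z} \leq 990 \, \capacity \bound \log(2 K n)/n$, with the large numerical constant absorbing the chaining, symmetrization, and self-bounding losses.

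The main obstacle is precisely this sharpening step: a naive Dudley integral with the crude diameter $D_n \leq \bound$ only delivers the slow rate $O(\bound \sqrt{\capacity \log(K/\bound)/n})$, and recovering the $\log/n$ rate requires the localized $L_2(\Prob)$ radius to propagate through the chaining via the self-bounding estimate on $D_n$ (equivalently, via the local Rademacher complexity machinery of Bartlett--Bousquet--Mendelson and Koltchinskii referenced in the paper's footnote). Coupling $D_n$ to $\E \bar{Z}$ through the chaining integral without giving up an extra factor of $\sqrt{n}$ is the technical heart of the argument.
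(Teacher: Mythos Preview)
Your route is the paper's route: symmetrize, apply Dudley's chaining bound conditionally on the sample with the random radius $D_n=\sup_{f\in\F}\|f\|_{L_2(\Probn)}$, self-bound $D_n$ via $\Probn f^2\le\bound\,\Probn f$ and $\Prob f\le\|f\|_{L_2(\Prob)}\le\varepsilon$ to get $\E D_n^2\lesssim\bound(\varepsilon+\E\bar Z)$, and then solve the resulting quadratic fixed-point in $\sqrt{\E\bar Z}$.

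The one soft spot is your covering-number transfer. From a bare $L_2(\Prob)$ covering hypothesis there is no general mechanism to control $\N(u,\F,L_2(\Probn))$; neither an ``$L_\infty$-enlargement of the cover'' nor a ``Haussler-type VC-subgraph argument'' follows from polynomial $L_2(\Prob)$ entropy alone---both require extra structure (an $L_\infty$ cover in the first case, a VC-subgraph class in the second). The paper handles this step differently: it simply bounds the empirical covering number by the \emph{universal} metric entropy $\Entropy_2(u,\F):=\sup_Q\log\N(u,\F,L_2(Q))$ and applies the polynomial bound there, i.e.\ it reads the hypothesis as uniform in $Q$ (which is the customary meaning of ``VC-type''). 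Two further technicalities you elide but the paper makes explicit: it replaces the random radius by $D_n\vee\sigma$ with $\sigma=1/n$ before integrating, so that the factor $\sqrt{\log(K\sqrt{n}/(D_n\vee\sigma))}$ is deterministically bounded and can be pulled outside the expectation; and it closes with an approximation argument passing from countable to merely separable $\F$ at the cost of doubling $K$, which is where the $2K$ inside the logarithm comes from.
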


\begin{proof} 
To avoid measurability issues, we operate under the assumption that $\F$ is countable and in the final step of the proof apply an approximation argument.

Let $\epsilon_1, \ldots, \epsilon_n$ be iid Rademacher random variables. 
We write $\E_\epsilon$ for the expected value with respect to the random variables $\epsilon_1, \ldots, \epsilon_n$. That is, if $A$ is a random variable depending only on $\epsilon_1, \ldots, \epsilon_n, X_1, \ldots, X_n$, then $\E_\epsilon A = \E \left[ A \mid X_1, \ldots, X_n \right]$. 
Also, let $\Fcond$ be the coordinate projection of $\F$ onto the sample $\mathbf{X} = (X_1, \ldots, X_n)$:
\begin{align*}
\Fcond := \bigl\{ (f(X_1), \ldots, f(X_n) : f \in \F \bigr\} .
\end{align*}
Finally, for a set $\G \subset \real^n$ let $D(\G)$ be half of the $\ell_2$-radius of $\G$, defined as
\begin{align*}
D(\G) := \frac{1}{2} \sup_{g \in \G} \| g \|_2 ;
\end{align*}
it makes sense to refer to this as a (half) radius since we will consider $D(\F)$ and the zero function is in $\F$. 
Our life will be made easier if we use the lower bounded quantity 
$D(\G) \opvee \sigma$, for some deterministic $\sigma \leq 1$ to be chosen later. 

The first step is symmetrization. The second step is based on a chaining argument, the result of which is Corollary 13.2 of \cite{boucheron2013concentration}, restated here\footnote{\cite{boucheron2013concentration} stated this result in terms of packing numbers, but careful inspection of their proof reveals that the argument works for covering numbers as well. Moreover, other proofs generally use covering numbers.} we state the result in terms of  in a specialization to Rademacher processes for convenience:
\begin{quote}
Let $(\T, d)$ be a finite pseudometric space and $(X_t)_{t \in \T}$ be a collection of sub-Gaussian random variables. Then for any $t_0 \in \T$,
\begin{align*}
\E \sup_{t \in \T} X_t - X_{t_0} \leq 12 \int_0^{\delta / 2} \sqrt{\log \N(u, \T, d)} du ,
\end{align*}
where $\delta = \sup_{t \in \T} d(t, t_0)$. 
\end{quote}

By pushing the cardinality of $\T$ to infinity, the above result also applies to countable classes. 
As noted by \cite{boucheron2013concentration} in the paragraph concluding the statement of their Corollary 13.2, this result applies to Rademacher processes. In our case, $t_0$ will correspond to the zero function element of $\F$.

Define $\Probc := \Probn - \Prob$. Now, from symmetrization and the above chaining-based result applied to the resulting Rademacher process, we have
\begin{align*}
n \E \sup_{f \in \F} \Probc f 
&\leq 2 \E \left( \E_\epsilon \sup_{f \in \F} \sum_{j=1}^n \epsilon_j f(X_j) \right) \\
&\leq 24 \E \int_0^{D(\Fcond) \opvee \sigma} \sqrt{\log \N(u, \Fcond, \|\cdot\|_2)} du
\end{align*}
which (since if $f_{|_X}$ is the obvious coordinate projection of $f \in \F$, then $\|f_{|_X}\|_2 = \sqrt{n} \|f\|_{L_2(\Probn)}$) is at most
\begin{align*}
&24 \E \int_0^{D(\Fcond) \opvee \sigma} \sqrt{\Entropy_2 \left( \frac{u}{\sqrt{n}}, \Fcond \right) } du \\
&\leq 24 \sqrt{\capacity} \E \int_0^{D(\Fcond) \opvee \sigma} \sqrt{\log \frac{K \sqrt{n}}{u}} du ,
\end{align*}
where $\Entropy_2(u, \T) := \sup_Q \log \N(u, \T, L_2(Q))$ is the \emph{universal metric entropy} of $\T$, and in the above display we have  $\Entropy_2\left(\frac{u}{\sqrt{n}}, \Fcond \right)$ rather than $\Entropy_2(u, \Fcond)$ because we work with the $L_2(\Probn)$-norm scaled by $\sqrt{n}$. 

Making the substitution $t = u / (D(\Fcond) \opvee \sigma)$, the above is equal to
\begin{align*}
&24 \sqrt{\capacity} \E \left( \bigl( D(\Fcond) \opvee \sigma \bigr) \int_0^1 \sqrt{\log \frac{K \sqrt{n}}{t \bigl( D(\Fcond) \opvee \sigma \bigr)}} dt \right) \\
&\leq 24 \sqrt{\capacity} \E \left( 
  \bigl( D(\Fcond) \opvee \sigma \bigr) \left( \sqrt{\log \frac{K \sqrt{n}}{D(\Fcond) \opvee \sigma}} + \frac{\sqrt{\pi}}{2} \right) 
\right) \\
&\leq 24 \sqrt{\capacity} 
         \left( \sqrt{\log \frac{K \sqrt{n}}{\sigma}} + \frac{\sqrt{\pi}}{2} \right) 
         \left( \sigma + \E D(\Fcond) \right) .
\end{align*}

Now, we focus on $\E D(\Fcond)$. Observe that
\begin{align*}
\E D(\Fcond) 
&= \frac{\sqrt{n}}{2} \E \sup_{f \in \F} \sqrt{\Probn f^2} \\
&= \frac{\sqrt{n}}{2} \E \sup_{f \in \F} \sqrt{\Probc f^2 + \Prob f^2} \\
&\leq \frac{\sqrt{n}}{2} \left[ 
  \E \sup_{f \in \F} \sqrt{\Probc f^2} + \sup_{f \in \F} \sqrt{\Prob f^2} 
\right] \\
&\leq \frac{\sqrt{n}}{2} \left[  \sqrt{\E \sup_{f \in \F} \Probc f^2} + \varepsilon  \right] .
\end{align*}
where the first part of the last step follows from Jensen's inequality and the second part follows from the small $L_2(\Prob)$-ball assumption on $\F$. 
The above is at most 
\begin{align*}
\frac{\sqrt{\bound n}}{2} \left[  \sqrt{\E \sup_{f \in \F} \Probc f} + \varepsilon  \right] .
\end{align*}

Thus, putting everything together and making the replacement $\varepsilon = \frac{1}{n}$, we have
\begin{align*}
\E \sup_{f \in \F} \Probc f 
&\leq \frac{24 \sqrt{\capacity}}{n} 
         \left( \sqrt{\log \frac{K \sqrt{n}}{\sigma}} + \frac{\sqrt{\pi}}{2} \right) 
         \left( \sigma + \frac{\sqrt{\bound n}}{2} \left[  \sqrt{\E \sup_{f \in \F} \Probc f} + \frac{1}{n}  \right] 
         \right) .
\end{align*}
Finding the minimal value of $\E \sup_{f \in \F} \Probc$ just amounts to solving a quadratic equation, yielding the solution set
\begin{align*}
\sqrt{E \sup_{f \in \F} \Probc f} 
&\leq \frac{\psi \sqrt{\bound n}}{2} 
          + \sqrt{\psi \left( \sigma + \frac{\sqrt{\bound / n}}{2} \right) } 
\end{align*}
for $\psi = \frac{24 \sqrt{\capacity}}{n} \left( \sqrt{\log \frac{K \sqrt{n}}{\sigma}} + \frac{\sqrt{\pi}}{2} \right)$. 

Making the replacement $\sigma = \frac{1}{n}$, squaring, and some coarse bounding yields
\begin{align*}
\E \sup_{f \in \F} \Probc f 
\leq \frac{990 \capacity \bound \log(K n)}{n} 
\end{align*}
for $n \geq 4$, $\bound \geq 1$, and $\capacity \geq 1$.  

We now present the approximation argument to handle separable $\F$. Since $\F$ is separable, it suffices to consider a countable dense subset $\F' \subset \F$; however, a little more work is required as the covering numbers of $\F'$ may differ slightly from the covering numbers of $\F$. We now control the covering numbers of $\F'$ in terms of the covering numbers of $\F$. 
Observe that if there is an $\varepsilon$-net of $\F$ of cardinality $N$, then there is a $(2 \varepsilon)$-net of some $\F' \subset \F$ of cardinality $N$. Hence, if there is an optimal $\varepsilon$-net of $\F$ of cardinality $N$, then an optimal $(2 \varepsilon)$-net of $\F'$ has cardinality at most $N$. That is, for any probability measure $Q$ on $\X$, for any $u > 0$ we have $\N(2 u, \F', L_2(Q)) \leq \N(u, \F, L_2(Q))$. Thus, the result for separable $\F$ holds by replacing the constant $K$ with $2 K$.
\end{proof}

We now prove the localization result.

\begin{proof}[Proof of \autoref{thm:local-union}] 
First, so that we can just handle the case of functions with range $[0, 1]$, we (crudely) apply our analysis to the function class after scaling all functions by the factor $\frac{1}{\bound}$, and scale the approximation term in the last step by the factor $\bound$.\footnote{It may be possible to get a weaker dependence on $\bound$ with a more careful argument that depends on $\bound$ throughout; in particular, Talagrand's inequality can handle the parameter $\bound$.} 

For any $f_0 \in \F_\varepsilon$, observe that $\pi^{-1}(f_0)$ is the set of those functions that are covered by $f_0$ in the $L_2(\Prob)$-norm. We apply the Local Analysis Theorem (\autoref{thm:local-analysis}) to each element of the set of localized \emph{absolute difference} function classes
\begin{align*}
\left\{ \G_{f_0} : f_0 \in \F_\varepsilon \right\} ,
\end{align*}
for $\G_{f_0} := \left\{ |f_0 - f | : f \in \pi^{-1}(f_0) \right\}$. Consider an arbitrary $f_0 \in \F_\varepsilon$ and its corresponding class $\G_{f_0}$. Since $\tilde{\G}_{f_0} := \left\{ f_0 - f : f \in \pi^{-1}(f_0) \right\}$ is isomorphic to a subset of $\F$, and since any $\varepsilon$-net for $\tilde{G}_{f_0}$ trivially gives rise to an $\varepsilon$-net for $\G_{f_0}$ by taking the absolute value of each function from the original $\varepsilon$-net, it follows the $L_2(\Prob)$ covering numbers of $\G_{f_0}$ are bounded just as in \eqref{eqn:poly-covering}. 

Taking the union bound over $\F_\varepsilon$ with \autoref{thm:local-analysis} implies that with probability at least $1 - \delta$
\begin{multline*}
\max_{f_0 \in \F_\varepsilon} \sup_{f \in \pi^{-1}(f_0)} \Probn | f_0 - f | \\
\leq \frac{1}{n} \left( 
  990 \capacity \log(2 K n) + \sqrt{2 \left( \log \frac{1}{\delta} + \capacity \log(K n) \right) (1 + 3960 \capacity \log(2 K n))} + \frac{2 \left( \log \frac{1}{\delta} + \capacity \log(K n) \right)}{3} + 1 
\right) .
\end{multline*}
Ignoring the $\frac{1}{n}$ factor, the RHS is at most
\begin{align*}
990 \capacity \log(2 K n) + \sqrt{2 \left( \log \frac{1}{\delta} + \capacity \log(K n) \right) (1 + 3960 \capacity \log(2 K n))} + \log \frac{e}{\delta} + \capacity \log(K n) ,
\end{align*}
which is at most
\begin{align*}
&991 \capacity \log(2 K n) + \sqrt{2 \log \frac{1}{\delta} + 2 \capacity \log(K n) + 7920 \left( \log \frac{1}{\delta} \right) \capacity \log(2 K n) + 7920 (\capacity \log(2 K n))^2} + \log \frac{e}{\delta} \\
&\leq 1080 \capacity \log(2 K n) + \sqrt{2 \log \frac{1}{\delta} + 2 \capacity \log(K n) + 7920 \left( \log \frac{1}{\delta} \right) \capacity \log(2 K n)} + \log \frac{e}{\delta} \\
&\leq 1080 \capacity \log(2 K n) + 90 \sqrt{\left( \log \frac{1}{\delta} \right) \capacity \log(2 K n)} + \log \frac{e}{\delta} ,
\end{align*}
where the last inequality holds provided that $\delta$ is not too large; it suffices to assume $\delta \leq \frac{1}{2}$.
\end{proof}

Finally, we prove the fast rates exact oracle inequality for VC-type classes.

\subsection{Proof of \autoref*{thm:vc-type-fast-rates}}

\begin{proof}[Proof of \autoref{thm:vc-type-fast-rates}] 
For convenience, we begin by abusing notation and redefining $\F$ as $\F := \lossclass$; the abuse includes $f^*$ being redefined as $\lossof{f^*}$. With these abuses, for any $f \in \F$ we redefine $Z_f$ as $Z_f := f - f^*$. 

Next, we introduce a few subclasses that will be in play. Recall that for any $\gamma_n > 0$, $\F_{\succeq \gamma_n}$ is the subclass of $\F$ for which the excess risk is at least $\gamma_n$. 
Also, for any $\gamma_n > 0$, let $\F_{\succeq \gamma_n, \varepsilon}$ be a proper cover of $\F_{\succeq \gamma_n}$ with respect to the $L_2(\Probn)$ norm, with $\varepsilon = \frac{1}{n}$. 
For each $\eta > 0$ and $\F_{\succeq \gamma_n, \varepsilon}$, let $\F_{\succeq \gamma_n, \varepsilon}^{(\eta)} \subset \F_{\succeq \gamma_n, \varepsilon}$ correspond to those functions for which $\eta$ is the largest constant such that 
$\E \exp(-\eta Z_f) = 1$. 
After making the same implicit change to $\F_{\succeq \gamma_n, \varepsilon}$ for ``hyper-concentrated'' excess loss random variables (i.e. those $Z_f$ for which $\lim_{\eta \rightarrow \infty} \E \exp(-\eta Z_f) < 1$) as was made to $\F_{\succeq \gamma_n}$ in the proof of \autoref{thm:finite-fast-rates}, we have $\F_{\succeq \gamma_n, \varepsilon} = \bigcup_{\eta \in [\eta^*, \infty)} \F_{\succeq \gamma_n, \varepsilon}^{(\eta)}$. 

Let $\gamma_n = \frac{a}{n}$ for some constant $a > 1$ to be fixed later. 
Consider an arbitrary $\eta \in [\eta^*, \infty)$ for which $| \F_{\succeq \gamma_n, \varepsilon}^{(\eta)} | > 0$, and recall that all functions $f$ in this class satisfy $\E Z_f \geq \frac{a}{n}$. 
Individually for each such function $f$, we will apply the Cram\'er-Chernoff Theorem (\autoref{thm:cramer-chernoff}) as follows. From the Bounded Losses Lemma (\autoref{lemma:bounded-losses}), 
we have $\Lambda_{-Z_f}(\eta / 2) = \Lambda_{-\frac{1}{\bound} Z_f}(\bound \eta / 2)$. From the Stochastic Mixability Concentration Theorem (\autoref{thm:stochastic-mixability-concentration}), the latter is at most
\begin{align*}
-\frac{0.18 (\bound \eta \opwedge 1) (a / \bound)}{n} 
= -\frac{0.18 \eta a}{(\bound \eta \opvee 1) n} .
\end{align*}
Hence, the Cram\'er-Chernoff Theorem (\autoref{thm:cramer-chernoff}) with $t = \frac{a}{2 n}$ and the $\eta$ from that Theorem taken to be $\eta / 2$ implies that the probability of the event $\Probn f \leq \Probn f^* + \frac{a}{2 n}$ is at most
\begin{align*}
\exp\left( -0.18 \frac{\eta}{\bound \eta \opvee 1} a + \frac{a \eta}{4 n} \right) 
= \exp\left( - \eta a \left( \frac{0.18 }{\bound \eta \opvee 1} - \frac{1}{4 n} \right) \right) .
\end{align*}

Applying the union bound over all of $\F_{\succeq \gamma_n, \varepsilon}$, we conclude that
\begin{align*}
\Pr \left\{ 
  \exists f \in \F_{\succeq \gamma_n, \varepsilon} : 
  \Probn f \leq \Probn f^* + \frac{a}{2 n} 
\right\} 
\leq \left( \frac{K}{\varepsilon} \right)^\capacity \exp \left( -\eta^* a \left( \frac{0.18}{\bound \eta^* \opvee 1} - \frac{1}{4 n} \right) \right) .
\end{align*}

Now, observe that if we consider some fixed failure probability $\frac{\delta}{2}$ and invert to obtain the corresponding $a$, we have
\begin{align}
a 
= \frac{\capacity \log \frac{K}{\varepsilon} + \log \frac{2}{\delta}}{\eta^* \left( \frac{0.18}{\bound \eta^* \opvee 1} - \frac{1}{4 n} \right)} 
&= \frac{\capacity \log \frac{K}{\varepsilon} + \log \frac{2}{\delta}}{\eta^* \left( \frac{0.18 - (\bound \eta^* \opvee 1) / (4 n)}{\bound \eta^* \opvee 1} \right)} \nonumber \\
&\leq 
\frac{\left( \bound \eta^* \opvee 1 \right) \left( \capacity \log \frac{K}{\varepsilon} + \log \frac{2}{\delta} \right)}{\eta^* \left( 0.18 - \frac{1}{4 n} \right)} \nonumber \\
&\leq 
8 \left( \bound \opvee \left( \frac{1}{\eta^*} \right) \right) \left( \capacity \log \frac{K}{\varepsilon} + \log \frac{2}{\delta} \right) =: \lambda  \label{eqn:upper-bound-choice-of-a} ,
\end{align}
for $\gamma^{(1)}_n := \frac{\lambda}{n}$, 
where the last inequality holds since $n \geq 5$. Note that by instead setting $\frac{a}{n} = \gamma^{(1)}_n$ (defined in \eqref{eqn:upper-bound-choice-of-a}) the failure probability can only decrease. 
Thus, for any $\gamma_n \geq \gamma^{(1)}_n$, we have
\begin{align*}
\Pr \left\{ 
  \exists f \in \F_{\succeq \gamma_n, \varepsilon} : 
  \Probn f \leq \Probn f^* + \frac{\gamma_n}{2} 
\right\} 
\leq \frac{\delta}{2} .
\end{align*}

Next, we control the behavior of the subclass $\F_{\succeq \gamma_n} \setminus \F_{\succeq \gamma_n, \varepsilon}$. From \autoref{thm:local-union}, if $\delta \leq \frac{1}{2}$
\begin{align*}
\Pr \Bigl\{ \exists f \in \F_{\succeq \gamma_n} : 
\Probn f 
< \Probn \pi(f) - \gamma^{(2)}_n
\Bigr\} \leq \frac{\delta}{2} .
\end{align*}
for $\gamma^{(2)}_n = \frac{\bound}{n} \left( 
  1080 \capacity \log(2 K n) 
  + 90 \sqrt{\left( \log \frac{2}{\delta} \right) \capacity \log(2 K n)} 
  + \log \frac{2e}{\delta} 
\right)$.

Now, combining the above two high probability guarantees, with probability at least $1 - \delta$ both statements below hold for all $f \in \F_{\succeq \gamma_n}$:
\begin{align*}
&\Probn f \geq \Probn \pi(f) - \gamma^{(2)}_n \\
&\Probn \pi(f) \geq \Probn f^* + \frac{\gamma_n}{2} .
\end{align*}
Thus, with the same probability, for all $f \in \F_{\succeq \gamma_n}$:
\begin{align*}
&\Probn f \geq \Probn f^* + \frac{\gamma_n}{2} - \gamma^{(2)}_n .
\end{align*}

Setting $\gamma_n = (\gamma^{(1)}_n \opvee 2 \gamma^{(2)}_n) + \frac{1}{n}$, and recalling that ERM selects hypotheses purely based on their empirical risk, we see that with probability at least $1 - \delta$, ERM will not select any hypothesis whose excess risk is at least
\begin{align*}
(\gamma^{(1)}_n \opvee (2 \gamma^{(2)}_n)) + \frac{1}{n} .
\end{align*}
\end{proof}

\end{document}